\documentclass[onefignum,onetabnum]{siamonline250211}

\usepackage{lipsum}
\usepackage{amsfonts}
\usepackage{graphicx}
\usepackage{epstopdf}
\usepackage{algorithmic}
\ifpdf
  \DeclareGraphicsExtensions{.eps,.pdf,.png,.jpg}
\else
  \DeclareGraphicsExtensions{.eps}
\fi

\newsiamremark{remark}{Remark}
\newsiamremark{hypothesis}{Hypothesis}
\crefname{hypothesis}{Hypothesis}{Hypotheses}
\newsiamthm{claim}{Claim}

\newtheorem{exmp}{Example}

\title{Lipschitz bounds for integral kernels\thanks{Under review.
\funding{This work was partially funded by LIEBHERR Aerospace Toulouse. SZ acknowledges the support 
by the ANR LabEx CIMI (grant ANR-11-LABX-0040) within the French State
Programme “Investissements d’Avenir.”}}}

\author{
Justin Reverdi\thanks{IRIT, IMT, LIEBHERR Aerospace, Toulouse, France
  (\email{justin.reverdi@gmail.com}).}
\and
Sixin Zhang\thanks{Univ Toulouse, Toulouse INP, CNRS, IRIT, Toulouse, France
  (\email{sixin.zhang@irit.fr}, \email{serge.gratton@irit.fr}).}
\and
Fabrice Gamboa\thanks{IMT, Toulouse, France
  (\email{fabrice.gamboa@math.univ-toulouse.fr}).}
\and
Serge Gratton\footnotemark[3]
  }
\usepackage{amsopn}

\makeatletter
\newcommand*{\addFileDependency}[1]{
  \typeout{(#1)}
  \@addtofilelist{#1}
  \IfFileExists{#1}{}{\typeout{No file #1.}}
}
\makeatother

\usepackage{amsmath,amsfonts,bm}

\def\eqref#1{equation~\ref{#1}}

\def\1{\bm{1}}

\DeclareMathAlphabet{\mathsfit}{\encodingdefault}{\sfdefault}{m}{sl}
\SetMathAlphabet{\mathsfit}{bold}{\encodingdefault}{\sfdefault}{bx}{n}

\newcommand{\E}{\mathbb{E}}

\newcommand{\R}{\mathbb{R}}

\newcommand{\Cov}{\mathrm{Cov}}

\usepackage{chngcntr}

\usepackage{ntheorem}
\usepackage{mathrsfs}  
\usepackage{tikz-cd}
\usepackage{subcaption}
\usepackage[margin=1in]{geometry}
\usepackage{hyperref}
\usepackage{amssymb}
\usepackage{booktabs}
\usepackage{amsmath}
\usepackage{mathtools}
\usepackage{fullpage}
\allowdisplaybreaks
\usepackage{url}
\usepackage{dsfont}
\usepackage{enumerate}

\usepackage{bbm}

\theoremstyle{remark}

\newcommand{\RR}{\mathbb{R}}
\newcommand{\Lip}{\text{Lip}}

\usepackage[normalem]{ulem}
\newcommand{\stkout}[1]{\ifmmode\text{\sout{\ensuremath{#1}}}\else\sout{#1}\fi}

\newcommand{\X}{\mathcal{X}}
\newcommand{\HH}{{\mathcal{H}_k}}
\newcommand{\om}{\omega}
\newcommand{\thet}{\theta}

\newcommand{\vphi}{\varphi_k}

\newcommand{\Om}{\Omega}
\newcommand{\op}{\textit{op}}
\newcommand{\Span}{\textit{Span}}

\newcommand{\lV}{\left\lVert}
\newcommand{\rV}{\right\rVert}
\newcommand{\rVH}{\rV_\HH}
\newcommand{\rVE}{\rV_E}
\newcommand{\rVe}{\rV}
\newcommand{\rVop}{\rV_{\op}}

\newcommand\Lp[2]{{L^{#2}(#1)}}
\newcommand\LLp[2]{{\mathcal{L}^{#2}(#1)}}
\newcommand\rVL[2]{\rV_\Lp{#1}{#2}}

\newcommand\LpX[2]{{L_\X^{#2}(#1)}}

\newcommand{\lb}{\left\langle}
\newcommand{\rb}{\right\rangle}
\newcommand{\rbH}{\rb_\HH}
\newcommand{\rbL}[2]{\rb_\Lp{#1}{#2}}
\newcommand{\rbLL}[2]{\rb_\LLp{#1}{#2}}

\newcommand{\T}{\top}
\newcommand{\uu}{u}
\newcommand{\uv}{v}
\newcommand{\ha}{h}

\newcommand{\z}{z}
\newcommand{\BE}{{{\bar{B}_E}}}
\newcommand{\BR}{{{\bar{B}_{\mathbb{R}^d}}}}

\newcommand{\EP}{{\mathbb{E}_P}}
\newcommand{\EPP}{{\mathbb{E}_{\zeta,b\sim\mu^1_\gamma\otimes p_b}}}

\ifpdf
\hypersetup{
  pdftitle={Lipschitz constant of the feature map of integral kernels},
  pdfauthor={}
}

\begin{document}

\maketitle

\begin{abstract}
Feature maps associated with positive definite kernels play a central role in kernel methods and learning theory, where regularity properties such as Lipschitz continuity are closely related to robustness and stability guarantees. Despite their importance, explicit characterizations of the Lipschitz constant of kernel feature maps are available only in a limited number of cases. In this paper, we study the Lipschitz regularity of feature maps associated with integral kernels under differentiability assumptions. We first provide sufficient conditions ensuring Lipschitz continuity and derive explicit formulas for the corresponding Lipschitz constants. We then identify a condition under which the feature map fails to be Lipschitz continuous and apply these results to several important classes of kernels. For infinite width two-layer neural network with isotropic Gaussian weight distributions, we show that the Lipschitz constant of the associated kernel can be expressed as the supremum of a two-dimensional integral, leading to an explicit characterization for the Gaussian kernel and the ReLU random neural network kernel. We also study continuous and shift-invariant kernels such as Gaussian, Laplace, and Matérn kernels, which admit an interpretation as neural network with cosine activation function. In this setting, we prove that the feature map is Lipschitz continuous if and only if the weight distribution has a finite second-order moment, and we then derive its Lipschitz constant. Finally, we raise an open question concerning the asymptotic behavior of the convergence of the Lipschitz constant in finite width neural networks. Numerical experiments are provided to support this behavior.
\end{abstract}

\textbf{Keywords}: Lipschitz continuity, Integral Kernels, Random Features, Robustness, Neural Networks.

\vspace{0.6cm}

\section{Introduction}

The deployment of learning algorithms in safety–critical environments demands
quantitative guaranties of stability and controlled sensitivity to input
perturbations. 
Robustness to small adversarial perturbations has emerged as a
central requirement in modern applications such as autonomous decision systems,
scientific imaging, and medical diagnostics \cite{SIAM1, SIAM2,SIAM3}. 
A classical and powerful global
measure of robustness is the Lipschitz constant of a predictor $h : \X \to \R $ on the input space $\X$.
Bounding this
quantity ensures that the model cannot react excessively to small variations of
the input, thus providing a form of certified robustness against various perturbations, including adversarial
attacks \cite{SIAM1,combettes_lipschitz_2020,SIAM3}. 
In adversarial robustness, a global Lipschitz constraint upper-bounds the worst-case amplification of perturbations and therefore yields a certified radius of resistance to attacks \cite{SIAM1}.  
In this context, designing methods that
allow one to compute or tightly bound the Lipschitz constant of a model is of fundamental importance.

For kernel methods, robustness bounds often reduce to bounding the Lipschitz constant of the underlying representation.  
Recent work has investigated this question based on the same central idea: to analyze the Lipschitz continuity of the feature map 
associated with a given kernel \cite{fiedler_lipschitz_2023, lipop, D2KE, NIPS2017_38ed162a, bietti_inductive_2019, vanwaarde2021kernelbasedmodelsanalysis}. 
More precisely, 
on a Hilbert space $\HH$ (the RKHS of $k$) consisting of functions from $\X$ to $\RR$, one analyzes 
the Lipschitz constant of the associated feature map $\vphi:x\mapsto k(x,\cdot)$, and then applies 
the Cauchy-Schwarz inequality 
to obtain an upper bound for every $h\in\HH$, that 
is, $\Lip(\ha)\leq \Lip(\vphi)\lV \ha \rVH$
thanks to the reproducing property $h(x) = \lb h, k(x,\cdot) \rbH$. 
Following this line of research, 
our main contribution is to obtain a closed-form 
characterization of the Lipschitz constant 
of $\vphi$ when $k$ belongs to integral kernels \cite{integral_rkhs, fiedler_lipschitz_2023, rahimi_random_nodate}. This family of kernels is related to one-layer neural networks with random features for large-scale kernel classification and regression \cite{rahimi_random_nodate, rf1, rf2, rfnn}. 

Section \ref{sec:diffink} reviews the notion of integral kernel, which admits an integral representation of the form
\[
k(x,x')
= \int_{\Om} \phi(\om,x)\,\phi(\om,x')\,dP(\om)~.
\]
Here, $\phi$ plays the role of random features, and they are averaged through a probability measure $P$. 
It includes a broad and expressive class of kernels encountered in applications \cite{SIAM4, D2KE, rahimi_random_nodate, integral_rkhs}.
Their regularity has been recently studied aiming to get Lipschitz bounds on every functions of their associated RKHS \cite{fiedler_lipschitz_2023, D2KE, bietti_approximation_2022, BNN}.
However, existing Lipschitz bounds for the associated feature maps are typically non-exact : they rely on uniform bounds on $\phi(\om,\cdot)$ with respect to $\om$, or provide only coarse inequalities.
The present work addresses this gap by deriving an exact formula for the
Lipschitz constant $\Lip(\vphi)$ of an integral kernel $k$ when $\phi$ admits a differential structure. This is the smallest possible Lipschitz bound for the feature map $\vphi$.

We provide in Section \ref{sec:appRNN} an exact formula for $\Lip(\vphi)$ when
$\phi(\omega,x)=\sigma(\langle w,x\rangle + b)$, with an isotropic Gaussian law on $w$.
We find that the exact formula for $\Lip(\vphi)$ reduces to a two-dimensional integral
which becomes computationally tractable. Section \ref{sec:RFF} analyzes continuous and shift-invariant kernels, 
including the common Gaussian and Laplacian kernels. We further discuss conditions where $\Lip(\vphi) = +\infty$.
Our results extend and substantially sharpen classical results on Lipschitz continuity in RKHS, 
thereby serving as a basis for a principled robustness
analysis in kernel-based models.
Section \ref{sec:num} studies numerically the behavior of the Lipschitz constant 
of two-layer neural networks with a finite number of random features, in relation 
to the obtained exact formula. They provide an empirical support regarding the  
convergence of this Lipschitz constant to $\Lip(\vphi)$ as the number of random features grows to infinity.
Section \ref{sec:con} concludes.

\textbf{Notations}: Let $(\Omega,\mathcal{A},P)$ denote a probability space. Let $(\Lp{P}{2},\lb\cdot,\cdot\rbL{P}{2})$ be the Hilbert space of real-valued, square-integrable functions on $\Omega$, with the standard inner product $\lb \uu,\uv \rbL{P}{2}$ (see Appendix \ref{Appendix:Lp}).
For two metric spaces $(A,d_A)$ and $(B,d_B)$, the Lipschitz constant $\Lip(f)$ of a Lipschitz-continuous function $f:A\to B$ is the smallest $C\geq 0$ such that
$d_B(f(a_2), f(a_1)) \leq C\,d_A(a_2, a_1), \forall a_1,a_2\in A.$ (see Appendix \ref{Appendix:Banach}).
Let $(E,\lV\cdot\rV_E)$ and $(F,\lV\cdot\rV_F)$ be Banach spaces. 
For a function $f:E\mapsto F$ which is Fréchet differentiable at $x_0$ (see Appendix \ref{Appendix:Banach}), we note $D_x f(x_0)$ its Fréchet derivative, or just $D f(x_0)$ when there is no ambiguity. The norm of a bounded linear operator $T:E\mapsto F$ is written as $\lV T \rVop$. Finally, the closed unit ball of $E$ is denoted by $\BE$.

\section{Differentiable integral kernels}
\label{sec:diffink}

The kernel methods in machine learning consist of finding solutions in a functional space called the RKHS (see \Cref{Appendix:kernel}). The purpose of this section is to study the Lipschitz continuity of these functions for a particular kind of kernel, called integral kernels. Based on a fundamental representation theorem for RKHS associated with an integral kernel in Section \ref{sec:preliminaryRKHSIkernel}, we derive in Section \ref{sec:diffIntKernel} an exact formula to compute the Lipschitz constant of their feature maps
under a suitable differentiability condition of integral kernels. 
This key result allows us to 
obtain an exact Lipschitz constant
for the feature map in infinite random neural networks and shift-invariant kernels in Section \ref{sec:appRNN} and \ref{sec:RFF}.

\subsection{Preliminaries on Lipschitz Continuity in RKHS and Integral Kernels}
\label{sec:preliminaryRKHSIkernel}

We first explain how to characterize the Lipschitz continuity of the functions in an RKHS defined on a metric space $\X$. Having a metric on $\X$ is necessary to define the Lipschitz continuity.
Then, we review the definition and properties of integral kernels which we focus on in the article. In particular, we expose an elementary proof of a fundamental representation theorem 
that characterizes the RKHS associated with integral kernels as mixtures with elements from $\Lp{P}{2}$. This theorem also provides an expression for the RKHS norm that will be useful in computing the exact Lipschitz constant in the next section.

If $k$ is a positive definite and symmetric (PDS) kernel, then from \cite{rkhs_base}, there exists a unique Hilbert space $\HH$ (the RKHS of $k$) consisting of functions from $\X$ to $\RR$, such that $k(x,\cdot)\in \HH$ and $h(x) = \lb h, k(x,\cdot) \rbH$ for all $x\in\X$ and $h\in\HH$. The associated feature map is $\vphi:x\mapsto k(x,\cdot)$.
Consider a feature map $\vphi$ that is Lipschitz with respect to the metric on $\HH$ induced by its inner product. Then, 
we can obtain an upper bound for every $h\in\HH$ (see \Cref{Appendix:Banach} for background knowledge),
\begin{equation}\label{eq:lipRKHSchar}
    \Lip(\ha)\leq \Lip(\vphi)\lV \ha \rVH~.
\end{equation}

Our main interest is a family of kernels, called integral kernels,
where we can obtain an explicit control of $\Lip(\vphi)$.
It represents a large variety of kernels, such as the continuous kernels with closed $\X$ in the Mercer theorem
(replace the series in  \cite[Theorem 1]{isometry} 
by an integral with counting measure), the continuous and shift invariant kernels with $\X=\RR^d$ \cite[1.4.2 Bochner's Theorem]{rudin1962fourier}. 
Moreover, in the random feature methods \cite{rahimi_random_nodate}, the kernel that is approximated 
by random features is always expressed as an expectation, so it is also an integral kernel. We can further construct new kernels by integration as in \cite{D2KE} and \cite{learning_kernel}.

\begin{definition}[Integral kernel]
\label{integral_kernel}
   Let $\LpX{P}{2}$ be the space of functions $\phi:\Om\times\X\rightarrow\RR$ such that for every $x\in\X$ we have $\phi(\cdot,x) :=\om\mapsto \phi(\om,x)\in\Lp{P}{2}$. 
    Then, the integral kernel  of $\phi\in\LpX{P}{2}$ is the following application,
    \begin{align*} 
    k:\X\times\X&\rightarrow\RR\\
        (x,x')&\mapsto \int_\Om \phi(\om,x)\phi(\om,x')dP(\om)  ~.
    \end{align*}
\end{definition}

An integral kernel $k$ is always positive definite and symmetric (PDS) \cite[Theorem 5.5]{fiedler_lipschitz_2023}. The integral kernels are defined on the inner product of $\Lp{P}{2}$. The previous remark, combined with the fact that $\Lp{P}{2}$ is a Hilbert space, leads to a nice characterization of its RKHS and the RKHS norm.

\begin{theorem}\cite[Proposition 1]{isometry}
\label{integralrkhs}
 Let $\phi\in\LpX{P}{2}$, $\theta(x) = \phi(\cdot,x)$ for all $x\in\X$ and $V := \overline{\Span(\thet(\X)})$. Define the operator $S:\Lp{P}{2}\rightarrow \mathbb{R}^\X$ by
\begin{align*}
 S(\uu)(x)&= \lb \uu ,\thet(x) \rbL{P}{2}~,
\end{align*}
Then, the restriction of $S$ to $V$ into $\HH$ 
is an isometric isomorphism.
\[
    \begin{tikzcd}
    {} & \X \arrow[dr, "\vphi"] \arrow[dl, "\thet"'] \\
    V \arrow[rr, Rightarrow, "S_{|V}"] && \HH
    \end{tikzcd}
\]
\end{theorem}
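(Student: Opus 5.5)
The plan is to equip the image $S(V)\subset\RR^\X$ with the norm transported from $V$, and then check that this space is a reproducing kernel Hilbert space with kernel $k$. Uniqueness of the RKHS (\cite{rkhs_base}) then forces $S(V)=\HH$ with equal norms, which is the claimed isometric isomorphism.

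First, show that $S_{|V}$ is injective. If $\uu\in V$ and $S(\uu)=0$, then $\lb \uu,\thet(x)\rbL{P}{2}=0$ for every $x$. So $\uu$ is orthogonal to $\Span(\thet(\X))$, and by continuity of the inner product also to its closure $V$. Since $\uu\in V$, this gives $\uu=0$. A second observation reduces everything to $V$: for general $\uu\in\Lp{P}{2}$, writing $\uu=\Pi_V\uu+\uu^\perp$ shows $S(\uu)=S(\Pi_V\uu)$. We will not need this, but it confirms that $S(\Lp{P}{2})=S(V)$.

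Next, define $\mathcal{G}:=S(V)$ with inner product $\lb S\uu,S\uv\rb_{\mathcal{G}}:=\lb \uu,\uv\rbL{P}{2}$ for $\uu,\uv\in V$. This is well defined by injectivity. It makes $S_{|V}:V\to\mathcal{G}$ an isometric linear bijection, so $\mathcal{G}$ is a Hilbert space because $V$ is closed in the Hilbert space $\Lp{P}{2}$. We then verify the two RKHS axioms.
\begin{itemize}
\item For each $x$, $\thet(x)\in V$, and $S(\thet(x))(x')=\int_\Om\phi(\om,x')\phi(\om,x)\,dP(\om)=k(x,x')$. Hence $k(x,\cdot)=S(\thet(x))\in\mathcal{G}$.
\item For $h=S\uu\in\mathcal{G}$ we have $\lb h,k(x,\cdot)\rb_{\mathcal{G}}=\lb \uu,\thet(x)\rbL{P}{2}=h(x)$, which is the reproducing property.
\end{itemize}
By the uniqueness of the RKHS associated with the PDS kernel $k$, we conclude $\mathcal{G}=\HH$ as sets and as inner product spaces. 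Therefore $S_{|V}:V\to\HH$ is an isometric isomorphism, and the diagram commutes because $S\circ\thet=\vphi$.

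The main obstacle is invoking uniqueness correctly: the space $\mathcal{G}$ must genuinely be a Hilbert space of functions, and the point evaluations on it must be continuous. Completeness is inherited through the isometry, as noted above. Continuity of evaluation follows from Cauchy--Schwarz: $|h(x)|\le\lV \uu\rV\,\lV\thet(x)\rV=\lV h\rV_{\mathcal{G}}\sqrt{k(x,x)}$. If one prefers to avoid the uniqueness theorem, there is an alternative route. Both $\HH$ and $\mathcal{G}$ contain $\Span\{k(x,\cdot)\}$ with the same inner product, namely $\lb k(x,\cdot),k(x',\cdot)\rb=k(x,x')$ in both. That span is dense in each space: in $\HH$ by the standard construction, and in $\mathcal{G}$ as the image of the dense set $\Span(\thet(\X))\subset V$. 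Since norm convergence implies pointwise convergence in both spaces, the two completions agree as function spaces.
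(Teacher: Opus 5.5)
Your proof is correct. The paper does not prove this statement: it imports it from \cite{isometry} and only proves the downstream Corollary~\ref{cor:integral_rkhs} in Appendix~\ref{app:proofThm22}, so there is no in-paper argument to compare against.

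What you give is the standard argument. You transport the Hilbert structure of the closed subspace $V$ to $S(V)$ through the injective map $S_{|V}$. You then check that $k(x,\cdot)=S(\thet(x))\in S(V)$ and that $\langle S\uu, k(x,\cdot)\rangle = \langle \uu,\thet(x)\rangle_{\Lp{P}{2}} = S(\uu)(x)$. The uniqueness part of the Moore--Aronszajn theorem (Theorem~\ref{moore}) then gives the conclusion. Your ``alternative route'' through density of $\Span\{k(x,\cdot)\}$ in both spaces is just that uniqueness proof written out, so the two routes are the same argument.

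Each step holds: injectivity from $V\cap V^\perp=\{0\}$, completeness inherited through the isometry, and continuity of evaluation by Cauchy--Schwarz. Like the paper's proof of the corollary, your argument uses no structure on $\X$.

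Your side remark that $S(\uu)=S(\Pi_V\uu)$, hence $S(\Lp{P}{2})=S(V)$, is exactly the orthogonal-decomposition step the paper uses to derive Corollary~\ref{cor:integral_rkhs} from this theorem. Combined with your argument, the appendix's chain of results becomes self-contained, with no reliance on the cited reference.
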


Theorem \ref{integralrkhs} characterizes $\HH$ through the feature space $V$ based on the operator $S$. 
This is natural because the kernel $k$ matches the inner product in the Hilbert space $\Lp{P}{2}$, so we can represent the RKHS of $k$ as the image of the operator $S$ that transforms the elements of $\Lp{P}{2}$ into elements of $\HH$.

The next property 
shows that the image of $\Lp{P}{2}$ by $S$ is actually the RKHS $\HH$ and therefore the norm on $\HH$ 
can be related to the norm on $\Lp{P}{2}$.
It is fundamental for analyzing the Lipschitz continuity of $\vphi$.

\begin{corollary}
\cite[Theorem 3.1 and Corollary 4.1]{integral_rkhs} 
\label{cor:integral_rkhs}
    Under the same assumption as Theorem \ref{integralrkhs}, we have
    $V = (\textit{Ker}(S))^\perp$ and the integral kernel $k$ of $\phi$ has the associated RKHS \[\HH = \left\{S(u) = y\mapsto\int_\Om \uu(\om) \phi(\om,y)dP(\om): \uu \in \Lp{P}{2}\right\}, \]
with the norm \[\lV h \rVH = \inf\left\{ \lV \uu \rVL{P}{2} : \uu\in S^{-1}(h)\right\}~.\]
\end{corollary}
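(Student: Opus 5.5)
We derive Corollary~\ref{cor:integral_rkhs} from Theorem~\ref{integralrkhs} by orthogonal decomposition in the Hilbert space $\Lp{P}{2}$. Throughout, $S:\Lp{P}{2}\to\RR^\X$ is linear and $V=\overline{\Span(\thet(\X))}$ is a closed subspace.

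\textbf{Step 1: $V=(\textit{Ker}(S))^\perp$.} By definition, $u\in\textit{Ker}(S)$ if and only if $\lb u,\thet(x)\rbL{P}{2}=0$ for every $x\in\X$. This says exactly that $u\perp\thet(\X)$. By linearity and continuity of the inner product, this is equivalent to $u\perp V$. Hence $\textit{Ker}(S)=V^\perp$. Since $V$ is closed, we get $(\textit{Ker}(S))^\perp=V^{\perp\perp}=V$.

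\textbf{Step 2: $S(\Lp{P}{2})=\HH$.} Take any $u\in\Lp{P}{2}$ and decompose it as $u=v+w$ with $v\in V$ and $w\in V^\perp=\textit{Ker}(S)$. Then $S(u)=S(v)$. Theorem~\ref{integralrkhs} gives $S(v)\in\HH$, so $S(\Lp{P}{2})\subseteq\HH$. Conversely, Theorem~\ref{integralrkhs} says $S_{|V}:V\to\HH$ is onto, so every $h\in\HH$ equals $S(v)$ for some $v\in V$. Writing the inner product as an integral, $S(u)(y)=\int_\Om u(\om)\phi(\om,y)\,dP(\om)$. This gives the stated description of $\HH$.

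\textbf{Step 3: the norm formula.} Fix $h\in\HH$. By Step 2, $S^{-1}(h)$ is nonempty, and it equals $v_h+\textit{Ker}(S)=v_h+V^\perp$, where $v_h=(S_{|V})^{-1}(h)$ is unique because $S_{|V}$ is injective. For any $u=v_h+w$ with $w\in V^\perp$, Pythagoras gives
\[
\lV u\rV^2_{\Lp{P}{2}}=\lV v_h\rV^2_{\Lp{P}{2}}+\lV w\rV^2_{\Lp{P}{2}}\geq \lV v_h\rV^2_{\Lp{P}{2}},
\]
with equality when $w=0$. So the infimum is attained at $v_h$. By the isometry in Theorem~\ref{integralrkhs}, $\lV v_h\rV_{\Lp{P}{2}}=\lV h\rV_\HH$, which is the claimed formula.

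The only delicate point is Step 1. It relies on $V$ being closed, so that $V^{\perp\perp}=V$, and on the inner product being continuous, so that orthogonality to $\thet(\X)$ passes to its closed span. Both hold by construction. Everything else is Hilbert-space projection combined with Theorem~\ref{integralrkhs}.
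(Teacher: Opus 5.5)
Your proof is correct and follows essentially the same route as the paper: show $\textit{Ker}(S)=V^\perp$, use the orthogonal decomposition $u=v_1+v_2$ with $v_1\in V$ to get $S(\Lp{P}{2})=S(V)=\HH$, and compare norms through the isometry of $S_{|V}$. Your Pythagoras step, $\lV u\rV^2=\lV v_h\rV^2+\lV w\rV^2$, is the correct form of the paper's norm comparison, which the paper writes incorrectly as $\lV v_1\rV+\lV v_2\rV=\lV u\rV$.
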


In Appendix \ref{app:proofThm22}, we provide an elementary proof 
of Corollary \ref{cor:integral_rkhs} based on \Cref{integralrkhs},
which differs from the proof in \cite{integral_rkhs} based on the direct integral theory \cite{nielsen2020direct}.
As in \cite{integral_rkhs}, our proof of Corollary \ref{cor:integral_rkhs} is still valid when $\X$ is an arbitrary set.

\subsection{Lipschitz constant of feature map with differentiable integral kernels}
\label{sec:diffIntKernel}

We investigate sufficient conditions on an integral kernel $k$ so that its feature map $\vphi$ is Lipschitz-continuous. 
For this purpose, we introduce a 
differentiability assumption on these kernels where $\Lip(\vphi)$ can be obtained as a function of the norm of the Fréchet derivative of $\vphi$ on $\X$.

In the literature, a common way to analyze the Lipschitz-continuous 
of the $\vphi$ is assume that $\phi\in\LpX{P}{2}$ and that there exists $F\in\RR$ such that $\Lip(\phi(\om,\cdot))\leq F$, $P$-a.e. As a consequence, $\vphi$ is Lipschitz-continuous with upper bound $F$ (\cite{fiedler_lipschitz_2023} Theorem 5.5, \cite{D2KE} Appendix 1.1). The next proposition gives a tighter bound for $\phi$ that is not necessarily uniformly Lipschitz. This proposition is based on an  assumption which is similar to the one in  our main result of this section in Theorem \ref{2 inf lip}.

\begin{proposition}
\label{prop:upperbound}
    Choose $\phi\in\LpX{P}{2}$ such that there exists $F\in\Lp{P}{2}$ with $P$-a.e. $\Lip(\phi(\om,\cdot)) \leq F(\om)$, then $
    \Lip(\vphi) \leq \lV F\rVL{P}{2}$.
\end{proposition}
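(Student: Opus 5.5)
The idea is to transfer the question from the RKHS $\HH$ to $\Lp{P}{2}$, where everything becomes an explicit integral. By \Cref{integralrkhs}, the restriction of $S$ to $V$ is an isometric isomorphism onto $\HH$. Moreover, $S(\thet(x))(y)=\lb \thet(x),\thet(y)\rbL{P}{2}=k(x,y)$, so $S(\thet(x))=\vphi(x)$. Since $\thet(x)\in V$ for every $x\in\X$, we get, for all $x,x'\in\X$,
\[
\lV \vphi(x)-\vphi(x')\rVH=\lV S(\thet(x)-\thet(x'))\rVH=\lV \thet(x)-\thet(x')\rVL{P}{2}.
\]
Alternatively, one can avoid the isometry and compute directly. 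Expanding $\lV k(x,\cdot)-k(x',\cdot)\rVH^2=k(x,x)-2k(x,x')+k(x',x')$ with the reproducing property, and then using the integral formula for $k$ from \Cref{integral_kernel}, gives exactly $\int_\Om (\phi(\om,x)-\phi(\om,x'))^2dP(\om)$. Either route yields the identity above. The upper bound via Corollary \ref{cor:integral_rkhs}, $\lV S(u)\rVH\le \lV u\rVL{P}{2}$, would also suffice.

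Next we use the pointwise Lipschitz assumption. Let $N$ be the $P$-null set outside of which $\Lip(\phi(\om,\cdot))\le F(\om)$. For $\om\notin N$,
\[
|\phi(\om,x)-\phi(\om,x')|\le F(\om)\,d_\X(x,x').
\]
Squaring and integrating against $P$ (the null set $N$ does not matter), we obtain
\[
\lV \thet(x)-\thet(x')\rVL{P}{2}^2\le d_\X(x,x')^2\int_\Om F(\om)^2dP(\om)=\lV F\rVL{P}{2}^2\,d_\X(x,x')^2.
\]
Taking square roots gives $\lV\vphi(x)-\vphi(x')\rVH\le \lV F\rVL{P}{2}\,d_\X(x,x')$ for all $x,x'\in\X$. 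Since $\Lip(\vphi)$ is the smallest such constant, $\Lip(\vphi)\le\lV F\rVL{P}{2}$.

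The only delicate point is measure-theoretic, and it is mild. The inequality $|\phi(\om,x)-\phi(\om,x')|\le F(\om)d_\X(x,x')$ must hold on a null-set complement that is independent of the pair $(x,x')$. This is guaranteed because the hypothesis is stated for the function $\phi(\om,\cdot)$ as a whole, a.e. in $\om$. Integrability of the squared difference follows because $\phi(\cdot,x)\in\Lp{P}{2}$ and $F\in\Lp{P}{2}$.
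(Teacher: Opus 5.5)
Your proof is correct and is essentially the paper's argument: the paper defers to the proof of Theorem~5.5 in \cite{fiedler_lipschitz_2023}, which is your direct route of expanding $\lV\vphi(x)-\vphi(x')\rVH^2=k(x,x)-2k(x,x')+k(x',x')=\int_\Om(\phi(\om,x)-\phi(\om,x'))^2\,dP(\om)$ (the same identity the paper uses in the proof of Theorem~\ref{2 inf}) and then bounding the integrand by $F(\om)^2 d_{\X}(x,x')^2$ in place of a uniform constant. One small remark: the null set does not actually need to be independent of $(x,x')$, since for each fixed pair an a.e.\ inequality already suffices to integrate.
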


This result can be obtained by substituting their uniform Lipschitz constant by the function $F$, in the proof of Theorem~5.5 in \cite{fiedler_lipschitz_2023}. It is valid in every metric space $\X$. To establish an exact formula for $\Lip(\vphi)$, we further assume that $\X$ is a subset of Banach space in order to analyze the derivative of $\vphi$. In this case, the Lipschitz constant can be expressed as the supremum of a suitably defined derivative, rendering it tractable in analysis.

\begin{theorem}
\label{2 inf lip}
    Let $\X$ be an open and convex subset of a Banach space $E$. Consider  $\phi\in\LpX{P}{2}$
    with an integral kernel $k$ and assume further that :
    \begin{enumerate}[(i)]

    \item There exists $F\in \Lp{P}{2}$ such that for $P$-almost all $\om\in\Om$, $\Lip(\phi(\om,\cdot)) \leq F(\om)$.
    \item For all $x \in\X$ and $P$-almost all $\om\in\Om$, $ \phi(\om,\cdot)$ is Fréchet differentiable at $x$.

    \item For all $x$ and $z$ in $\X$, $D_x\phi(\cdot,x)[z]$ is measurable.\footnote{We assume that $D_x \phi(\om, x)=0$ when $\phi(\om,x)$ is not differentiable at $x$.}

    \end{enumerate}
    
     Then, the associated feature map $\vphi$ is Fréchet differentiable everywhere on $\X$ and it is Lipschitz-continuous with \begin{align*}
         Lip(\vphi) &= \sup_{x\in\X,\z\in \BE}\lV D_x\phi(\om,x)[\z] \rVL{P}{2}\\
         &= \sup_{x'\in\X,z\in \BE}D_x D_y k(x',x')[z,z]^{\frac{1}{2}}~.
         \end{align*}
\end{theorem}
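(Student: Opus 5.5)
The plan is to work entirely on the $\Lp{P}{2}$ side and transfer the result to $\HH$. By Theorem \ref{integralrkhs}, $S_{|V}$ is an isometric isomorphism and $\vphi = S_{|V}\circ\thet$, with $\thet(x)=\phi(\cdot,x)\in V$. Hence $\lV\vphi(x)-\vphi(x')\rVH = \lV\thet(x)-\thet(x')\rVL{P}{2}$. It follows that $\Lip(\vphi)=\Lip(\thet)$, and Fréchet differentiability of $\vphi$ is equivalent to that of $\thet:\X\to\Lp{P}{2}$, since composition with a bounded linear isometry preserves both. So it suffices to show three things: $\thet$ is Fréchet differentiable, its derivative is $Dthet(x)[z]=D_x\phi(\cdot,x)[z]$, and $\Lip(\thet)=\sup_{x,z\in\BE}\lV D_x\phi(\cdot,x)[z]\rVL{P}{2}$.

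For differentiability, fix $x\in\X$. The candidate derivative is $L_x z := D_x\phi(\cdot,x)[z]$. This is measurable by (iii). By (i)--(ii), $|D_x\phi(\om,x)[z]|\le F(\om)\lV z\rVE$, so $L_x$ is a bounded linear map into $\Lp{P}{2}$ with $\lV L_x\rVop\le\lV F\rVL{P}{2}$. Now take $h$ small with $x+h\in\X$, which is possible since $\X$ is open, and consider
\[
R_h(\om)=\frac{\phi(\om,x+h)-\phi(\om,x)-D_x\phi(\om,x)[h]}{\lV h\rVE}.
\]
By (i) we have $|R_h(\om)|\le 2F(\om)$, and $F^2$ is integrable. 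By (ii), $R_h(\om)\to0$ for a.e. $\om$ as $h\to0$. This is the main obstacle, because dominated convergence applies to sequences while Fréchet differentiability needs uniformity over all directions $h\to0$ in a possibly infinite-dimensional $E$. I would argue by contradiction. Suppose $\lV R_h\rVL{P}{2}\not\to0$. Then there is a sequence $h_n\to0$ with $\lV R_{h_n}\rVL{P}{2}\ge\varepsilon$. But pointwise Fréchet differentiability at $x$ gives $R_{h_n}(\om)\to0$ a.e. along any sequence $h_n\to0$, and dominated convergence with dominator $4F^2$ then gives $\lV R_{h_n}\rVL{P}{2}\to0$, a contradiction. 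The exceptional null set may depend on $x$ but not on the sequence, which is exactly what (ii) provides.

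For the Lipschitz constant, the upper bound comes from convexity of $\X$ and the mean value inequality in Banach spaces:
\[
\lV\thet(x)-\thet(x')\rVL{P}{2}\le\sup_{t\in[0,1]}\lV D\thet(x'+t(x-x'))\rVop\lV x-x'\rVE\le\sup_{y\in\X}\lV D\thet(y)\rVop\lV x-x'\rVE .
\]
For the lower bound, differentiability at $x$ gives $\lV D\thet(x)[z]\rVL{P}{2}=\lim_{t\to0^+}\lV\thet(x+tz)-\thet(x)\rVL{P}{2}/t\le\Lip(\thet)\lV z\rVE$. Taking the supremum over $z\in\BE$ gives $\sup_{x}\lV D\thet(x)\rVop=\sup_{x,z\in\BE}\lV D_x\phi(\cdot,x)[z]\rVL{P}{2}$, which is finite by (i).

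For the kernel expression, note that $\lV D_x\phi(\cdot,x')[z]\rVL{P}{2}^2=\int (D_x\phi(\om,x')[z])^2\,dP(\om)$. We need to identify this with $D_xD_yk(x',x')[z,z]$. Write $k(x,y)=\lb\thet(x),\thet(y)\rbL{P}{2}$, a continuous bilinear pairing of two maps that are Fréchet differentiable into $\Lp{P}{2}$. Differentiating first in $y$ gives $D_yk(x,y)[z]=\lb\thet(x),D\thet(y)[z]\rbL{P}{2}$. This expression is differentiable in $x$ with $D_xD_yk(x,y)[z,z]=\lb D\thet(x)[z],D\thet(y)[z]\rbL{P}{2}$. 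Evaluating at $x=y=x'$ yields $\lV D\thet(x')[z]\rVL{P}{2}^2$, and taking square roots and the supremum gives the second equality.
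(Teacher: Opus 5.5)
Your proof is correct, and its analytic core is the same as the paper's: the bound $|R_h|\le 2F$, pointwise convergence off a null set that depends only on $x$, and dominated convergence along an arbitrary sequence $h_n\to 0$.

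The surrounding route is different. The paper differentiates $\varphi_k$ directly in $\mathcal{H}_k$ and obtains $D\varphi_k(x)[z]=S(D_x\phi(\cdot,x)[z])$. At first it only has the inequality $\|S(D_x\phi(\cdot,x)[z])\|_{\mathcal{H}_k}\le\|D_x\phi(\cdot,x)[z]\|_{L^2(P)}$. To upgrade this to an equality it needs an extra lemma, $D_x\phi(\cdot,x)[z]\perp\mathrm{Ker}(S)$, which it proves by differentiating $S(u)\equiv 0$ under the integral for $u\in\mathrm{Ker}(S)$.

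You instead use the distance identity $\|\varphi_k(x)-\varphi_k(x')\|_{\mathcal{H}_k}=\|\theta(x)-\theta(x')\|_{L^2(P)}$, so $\Lip(\varphi_k)=\Lip(\theta)$. This identity even follows from expanding $k(x,x)+k(x',x')-2k(x,x')$, as the paper does in its proof of Theorem~\ref{2 inf}, without Theorem~\ref{integralrkhs}. The whole computation then takes place for $\theta$ in $L^2(P)$, and the orthogonality lemma is never needed. Likewise, your kernel formula comes from the bilinear chain rule applied to $k(x,y)=\langle\theta(x),\theta(y)\rangle_{L^2(P)}$, rather than from the reproducing property. Your route is shorter and makes it plain that the Lipschitz constant is intrinsic to $\theta$.

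One step is stated too quickly: the transfer of Fréchet differentiability to $\varphi_k$. $S_{|V}$ is an isometry only on $V$, so you need one of two small justifications:
\begin{itemize}
\item $D\theta(x)[z]\in V$, because it is an $L^2(P)$-limit of difference quotients lying in the closed subspace $V$; or
\item $S:L^2(P)\to\mathcal{H}_k$ is linear with $\|S\|\le 1$ by Corollary~\ref{cor:integral_rkhs}, so the chain rule applied to $\varphi_k=S\circ\theta$ gives $D\varphi_k(x)=S\circ D\theta(x)$ directly.
\end{itemize}
The second option also recovers the paper's derivative formula.
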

The proof can be found in Appendix \ref{app:proofThm25}. 
It is based on representing the Fréchet derivative of 
$\vphi(x)$ as the projection of $\phi(\cdot,x)$ using the operator $S$, 
in the sense that for all $\z\in E$, $D_x\vphi(x)[\z] = S(D_x\phi( \cdot,x)[\z])$.
Then the problem to analyze the operator norm of $D_x\vphi(x)$ 
(to obtain the Lipschitz constant of $\vphi$)
is reduced to analyzing the norm of $ D_x\phi( \cdot,x)[\z]$ on $\Lp{P}{2}$.
The second expression with the second-order derivative $D_x D_y k(x,y): E \to ( E \to \R)$ evaluated at $x=x',y=x'$
is actually more general. 
It can be derived directly from the hypothesis that 
$\vphi$ is Fréchet differentiable on $\X$ (which is a consequence of the assumptions in \Cref{2 inf lip}). 
This expression exposes how the Lipschitz regularity of $\vphi$ depends on the highest curvature on the diagonal of $k$.

In the next section, we shall see that the conditions of Theorem \ref{2 inf lip} hold for infinite random neural networks with the ReLU activation function. More generally, it holds with a Lipschitz-continuous activation function and with an integrability condition.
The last result of this section is a sufficient condition for the feature map to be not Lipschitz continuous. It is based on hypothesis similar to \cref{2 inf lip}.
\begin{theorem}
\label{2 inf}
    Let $\X$ be an open and convex subset of a Banach space $E$. Consider  $\phi\in\LpX{P}{2}$
    with an integral kernel $k$ and assume further that :
    \begin{enumerate}[(i)]

    \item For all $x \in\X$ and $P$-almost all $\om\in\Om$, $ \phi(\om,\cdot)$ is Fréchet differentiable at $x$.

    \item For all $x$ and $z$ in $\X$, $D_x\phi(\cdot,x)[z]$ is measurable.

    \end{enumerate}

    Then, 
\[\sup_{x\in\X,\z\in \BE}\lV D_x\phi(\om,x)[\z] \rVL{P}{2}=+\infty\implies\Lip(\vphi) =+\infty~.\]    
\end{theorem}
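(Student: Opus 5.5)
We prove the contrapositive: assume $L:=\Lip(\vphi)<+\infty$ and show that $\lV D_x\phi(\cdot,x)[\z]\rVL{P}{2}\le L$ for every $x\in\X$ and $\z\in\BE$. Fix such $x,\z$. Since $\X$ is open, $x+t\z\in\X$ for all small $t>0$. Define the difference quotients
\[
u_t := \frac{\phi(\cdot,x+t\z)-\phi(\cdot,x)}{t}\in\Lp{P}{2}.
\]
By assumption (i), $u_t(\om)\to D_x\phi(\om,x)[\z]$ as $t\to0$ for $P$-almost every $\om$. Assumption (ii) ensures that this limit is measurable.

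\textbf{Step 1: identify $\lV u_t\rVL{P}{2}$ with an RKHS norm.} We will show that $\lV u_t\rVL{P}{2}$ is controlled by $\lV \vphi(x+t\z)-\vphi(x)\rVH/t$. The difficulty is that Corollary~\ref{cor:integral_rkhs} only gives $\lV S(u)\rVH\le \lV u\rVL{P}{2}$, which points the wrong way. We therefore use Theorem~\ref{integralrkhs} directly. The map $\thet(y)=\phi(\cdot,y)$ takes values in $V$, and $S_{|V}$ is an isometry onto $\HH$ with $S(\thet(y))=k(y,\cdot)=\vphi(y)$. Since $u_t\in V$, this gives exactly
\[
\lV u_t\rVL{P}{2}=\frac{\lV\vphi(x+t\z)-\vphi(x)\rVH}{t}\le L\lV\z\rVE\le L .
\]
An equivalent route avoiding $S$ is to note that $\lV\thet(y)-\thet(y')\rVL{P}{2}^2=k(y,y)-2k(y,y')+k(y',y')=\lV\vphi(y)-\vphi(y')\rVH^2$, by the definition of the integral kernel.

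\textbf{Step 2: pass to the limit.} Apply Fatou's lemma to $|u_t|^2$ along a sequence $t_n\downarrow0$:
\[
\int_\Om |D_x\phi(\om,x)[\z]|^2\,dP(\om)\le\liminf_{n}\int_\Om |u_{t_n}(\om)|^2\,dP(\om)\le L^2 .
\]
Hence $\lV D_x\phi(\cdot,x)[\z]\rVL{P}{2}\le L$. Taking the supremum over $x\in\X$ and $\z\in\BE$ bounds the supremum by $L<\infty$, which is the contrapositive of the claim.

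The main obstacle is the limit step. Without the integrable Lipschitz bound used in Theorem~\ref{2 inf lip}, dominated convergence is unavailable, so Fatou's lemma is the right tool: it needs only pointwise a.e.\ convergence and lower semicontinuity, not domination. The convention that $D_x\phi=0$ at non-differentiability points affects only a null set, so it is harmless. Convexity of $\X$ is not needed for this direction; openness suffices.
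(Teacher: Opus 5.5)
Your proof is correct and follows essentially the paper's route. Both rest on the identity $\|\vphi(x)-\vphi(y)\|_{\HH}^2=\int_\Om(\phi(\om,x)-\phi(\om,y))^2\,dP(\om)$ combined with Fatou's lemma along $x+t_nz$ with $t_n\to0$. The only difference is that you argue by contrapositive, which gives the slightly more quantitative bound $\sup_{x,z}\|D_x\phi(\cdot,x)[z]\|_{L^2(P)}\le\Lip(\vphi)$, whereas the paper fixes a threshold $M$ and shows that the difference quotients eventually exceed it.
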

The proof is given in Appendix \ref{app:proofThm26}.
 In the following example, we show how \cref{2 inf} can be applied to the Wiener kernel. 
 \begin{exmp} (Wiener)
     Let $\X = (0,1)$, and $k(x,y) = min(x,y)$ for all $x,y \in \X$. The mercer decomposition of $k$ is
     \begin{align*}
     k(x,y) &= \sum_{n=1}^\infty \phi_n(x)\phi_n(y) \lambda_n\\
     &= \int_{\mathbb{N}^*}\phi_n(x)\phi_n(y) dP(n) ~,
       \end{align*}
       where $\phi_n(x) = \sqrt{2}\sin\left((n-\frac{1}{2})\pi x\right)$, $\lambda_n = (n-\frac{1}{2})^{-2}\pi^{-2}$ and $P = \sum_{i=1}^\infty\lambda_n\delta_n$. We have
       \begin{align*}
           \lV \phi_n(0)'\rV_{l^2(P)}&= \sum_{n=1}^\infty 2\cos^2\left(0\right)=\infty ~.
       \end{align*}
       Applying \cref{2 inf}, we deduce that $\vphi$ is not Lipschitz continuous.
 \end{exmp}

\section{Application to infinite random neural networks}
\label{sec:appRNN}

We study the Lipschitz upper bound of a family of infinite random neural networks, that is, the functions in the RKHS of an integral kernel of a neuron. By a neuron, we mean a function of the form
\[\phi(\om,x) = \sigma(w^\T  x+b)~,\]
where $\om = (w,b)$ in $\Om = \RR^d\times\RR$, $x$ in an open and convex $\X \subset \RR^d$ and $\sigma:\RR\rightarrow\RR$. Let $P$ be a probability measure on $(\Om,\mathcal{B}(\RR^{d+1}))$, with $\mathcal{B}(\RR^{d+1})$ being the Borel $\sigma$-field. We can represent an infinite random neural network $h$ by 
Theorem \ref{integralrkhs},
\[h(x) = \int_\Om u(w,b) \sigma(w^\T  x+b)dP(w,b)~,
\]
where $u\in\Lp{P}{2}$.
It can be seen as a generalization of neural networks
with a functional parameter $u$. 
Such an integral representation of $h$ is similar to the neural network in the mean-field regime \cite{meanfield}, except that here the distribution on $(w,b)$ is fixed and we learn only the function $u$. The following corollary asserts that when the activation function $\sigma$ is Lipschitz-continuous, the infinite random neural network $h$ is also Lipschitz-continuous under a suitable integrable condition. 

\begin{corollary}
\label{cor:lip_NN}
    Suppose that $\sigma$ is  a Lipschitz-continuous function and that $(w,b)\mapsto\lV  w\rVe + |b|\in\Lp{P}{2}$. Then, $\phi\in\LpX{P}{2}$ and $h$ is Lipschitz-continuous with $\Lip(h)\leq \Lip(\sigma)\sqrt{\EP \lV w \rVe^2 \EP u(w,b)^2}$. 
\end{corollary}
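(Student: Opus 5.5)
The plan is to apply Proposition~\ref{prop:upperbound}, with $F(w,b) := \Lip(\sigma)\lV w\rVe$, and then use the RKHS bound~\eqref{eq:lipRKHSchar} together with the norm characterization of Corollary~\ref{cor:integral_rkhs}. Only two hypotheses need checking: that $\phi\in\LpX{P}{2}$, and that $F\in\Lp{P}{2}$ dominates the Lipschitz constants of the neurons $\phi(\om,\cdot)$.

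\textbf{Step 1: $\phi\in\LpX{P}{2}$.} Fix $x\in\X$. Since $\sigma$ is Lipschitz,
\[
|\sigma(w^\T x+b)|\le |\sigma(0)|+\Lip(\sigma)\,|w^\T x+b|\le |\sigma(0)|+\Lip(\sigma)\max(\lV x\rVe,1)\,(\lV w\rVe+|b|).
\]
The right-hand side is a constant plus a multiple of a function in $\Lp{P}{2}$, and constants lie in $\Lp{P}{2}$ because $P$ is a probability measure. Measurability of $\om\mapsto\sigma(w^\T x+b)$ holds because $\sigma$ is continuous. Hence $\phi(\cdot,x)\in\Lp{P}{2}$ for every $x\in\X$.

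\textbf{Step 2: the dominating function.} For each $\om=(w,b)$ and $x,x'\in\X$, Cauchy--Schwarz in $\RR^d$ gives
\[
|\sigma(w^\T x+b)-\sigma(w^\T x'+b)|\le \Lip(\sigma)\,|w^\T(x-x')|\le \Lip(\sigma)\lV w\rVe\lV x-x'\rVe .
\]
So $\Lip(\phi(\om,\cdot))\le F(\om)$ for every $\om$. Moreover $\lV w\rVe\le \lV w\rVe+|b|$, so $F\in\Lp{P}{2}$. Proposition~\ref{prop:upperbound} then yields
\[
\Lip(\vphi)\le \lV F\rVL{P}{2}=\Lip(\sigma)\sqrt{\EP\lV w\rVe^2}.
\]

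\textbf{Step 3: conclusion for $h$.} Here $h=S(u)$ with $u\in\Lp{P}{2}$, so $h\in\HH$ by Corollary~\ref{cor:integral_rkhs}, and $\lV h\rVH\le \lV u\rVL{P}{2}=\sqrt{\EP u(w,b)^2}$ since the RKHS norm is an infimum over preimages. Combining this with~\eqref{eq:lipRKHSchar} gives
\[
\Lip(h)\le \Lip(\vphi)\lV h\rVH\le \Lip(\sigma)\sqrt{\EP\lV w\rVe^2\,\EP u(w,b)^2}.
\]

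A more direct route is also available: for any $x,x'$, write $h(x)-h(x')=\int u\,(\phi(\cdot,x)-\phi(\cdot,x'))\,dP$ and apply Cauchy--Schwarz in $\Lp{P}{2}$ together with the pointwise bound from Step 2. This gives the same estimate without passing through $\vphi$.

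The only mildly delicate point is Step 1, where the bias term must be controlled with the assumed integrability of $\lV w\rVe+|b|$. Everything else is a direct application of earlier results.
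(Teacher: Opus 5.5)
Your proposal is correct and follows essentially the same route as the paper: it checks $\phi(\cdot,x)\in L^2(P)$ from the affine growth bound on $\sigma$, dominates $\mathrm{Lip}(\phi(\omega,\cdot))$ by $\mathrm{Lip}(\sigma)\|w\|$, invokes Proposition~\ref{prop:upperbound}, and finishes with inequality~(\ref{eq:lipRKHSchar}) and $\|h\|_{\mathcal{H}_k}\le\|u\|_{L^2(P)}$ from Corollary~\ref{cor:integral_rkhs}. Your closing remark is also valid: applying Cauchy--Schwarz directly to $h(x)-h(x')=\int u\,(\phi(\cdot,x)-\phi(\cdot,x'))\,dP$ gives the same bound without passing through the feature map.
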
 

The proof of Corollary \ref{cor:lip_NN} is provided in Appendix \ref{app:lip_NN}. This corollary gives basic conditions on $\sigma$ and $P$ where $\phi\in\LpX{P}{2}$ and provides an upper bound a Lipschitz upper bound on infinite random neural networks. However, this bound relies on the upper bound of $Lip(\vphi)$ 
given in \Cref{prop:upperbound} which 
does not take into account of the coupling between the $\sigma$ and weights $(w,b)$. We next improve this bound
in two cases where \Cref{2 inf lip} can be applied.

To be consistent with the assumption (iii) in Theorem \ref{2 inf lip}, we define $\sigma'(u) = 0$ on all non-differentiable points $u$ of $\sigma$. By the Rademacher's Theorem (see \cite[Theorem 3.1.6]{federer1969geometric}), a Lipschitz-continuous $\sigma$ is differentiable almost everywhere.

\begin{proposition}
    \label{prop:RNNlipinf}
     With the assumption of Corollary \ref{cor:lip_NN}, if one of the following assertions is true
     \begin{enumerate}[(i)]
         \item  $\sigma$ is differentiable in $\RR$, and $\sigma'$ is measurable.
         \item  $P$ is absolutely continuous and $\sigma'$ is measurable,
     \end{enumerate}
     then the assumptions of Theorem \ref{2 inf lip} hold.
\end{proposition}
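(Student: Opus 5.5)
We check the three hypotheses of Theorem \ref{2 inf lip} in turn for $\phi(\om,x)=\sigma(w^\T x+b)$, $\om=(w,b)$. Corollary \ref{cor:lip_NN} already gives $\phi\in\LpX{P}{2}$. Hypothesis (i) is immediate: for every $\om$,
\[
|\phi(\om,x)-\phi(\om,x')|\le \Lip(\sigma)\,|w^\T(x-x')|\le \Lip(\sigma)\lV w\rVe\lV x-x'\rVe .
\]
So we can take $F(\om)=\Lip(\sigma)\lV w\rVe$, which lies in $\Lp{P}{2}$ because $\lV w\rVe\le\lV w\rVe+|b|\in\Lp{P}{2}$. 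The finite-dimensional setting $E=\RR^d$ makes Fréchet differentiability coincide with ordinary differentiability. The chain rule gives $D_x\phi(\om,x)[z]=\sigma'(w^\T x+b)\,w^\T z$ at every point where $\sigma$ is differentiable at $w^\T x+b$.

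Under (i) of the statement, $\sigma$ is differentiable everywhere, so hypothesis (ii) of Theorem \ref{2 inf lip} holds for all $\om$ and all $x$. For (iii), the map $\om\mapsto\sigma'(w^\T x+b)\,w^\T z$ is the product of the continuous (hence Borel) map $\om\mapsto w^\T z$ and the composition of the measurable $\sigma'$ with the continuous affine map $\om\mapsto w^\T x+b$. It is therefore Borel measurable on $\Om=\RR^{d+1}$.

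Under (ii) of the statement, let $N\subset\RR$ be the set of non-differentiability points of $\sigma$. By Rademacher's theorem, $N$ is Lebesgue-null. For fixed $x$, the bad set of $\om$ is
\[
A_x=\{(w,b): w^\T x+b\in N\}.
\]
We will show $A_x$ is Lebesgue-null in $\RR^{d+1}$ using the change of variables $(w,b)\mapsto(w,\,w^\T x+b)$. This is a linear bijection with Jacobian determinant $1$, and it maps $A_x$ onto $\RR^d\times N$, whose Lebesgue measure is zero by Fubini. Absolute continuity of $P$ then gives $P(A_x)=0$, so $\phi(\om,\cdot)$ is differentiable at $x$ for $P$-almost all $\om$. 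That is hypothesis (ii).

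Two technical points remain, and this is the main, though mild, obstacle. First, $N$ must be measurable; this holds because the set of differentiability points of a continuous function is Borel. If needed, we can instead enclose $N$ in a Borel null set. Second, for measurability in (iii), the convention in the footnote sets the derivative to $0$ off the differentiability set. The formula $\sigma'(w^\T x+b)\,w^\T z$ then agrees with $D_x\phi(\om,x)[z]$ everywhere, because $\sigma'=0$ on $N$ by the paper's convention. The derivative is therefore the same measurable expression as in case (i). This concludes that all assumptions of Theorem \ref{2 inf lip} hold in both cases.
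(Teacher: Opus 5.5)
Your proposal is correct and follows essentially the same route as the paper. You use the same dominating function $F(\omega)=\mathrm{Lip}(\sigma)\|w\|$ and the same measurability argument for $(w^\top z)\,\sigma'(w^\top x+b)$, and you make the non-differentiability set $\{(w,b): w^\top x+b\in N\}$ null by turning it into a cylinder over a Lebesgue-null set through a linear change of variables (your determinant-one shear $(w,b)\mapsto(w,w^\top x+b)$ plays the role of the paper's orthonormal basis with $e_1=\bar x/\|\bar x\|$). Your extra remarks on the Borel measurability of $N$ and on consistency with the footnote convention supply details the paper leaves implicit.
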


The proof is given in Appendix \ref{app:lip_NN2}. Now that we have a framework for applying Theorem \ref{2 inf lip}, we will present a case in which computing the Lipschitz constant is straightforward. This case involves a neural network whose weights are drawn from isotropic Gaussian distributions.

Let $\mu_\gamma^d$ be the Gaussian measure in $\RR^d$, that is, for all $A\in\mathcal{B}(\RR^{d})$, \[\mu_\gamma^d(A) = \frac{1}{\gamma^d\sqrt{2\pi }^d}\int_A \text{exp}\left(-\frac{1}{2\gamma^2}\lV w \rVe^2 \right)d\lambda^d(w)~,\] where $\lambda^d$ is the Lebesgue measure on $\RR^d$.

The next result shows that we can compute the exact Lipschitz bound as the supremum over a two-dimensional integral. Its proof is deferred to \Cref{app:lip_NN3}.
\begin{theorem}[Infinite random neural network with Gaussian distribution]
\label{propRNN}
     Let $\sigma$ be a Lipschitz-continuous activation function such that $\sigma'$ is measurable. Define $P$ as a 
     product probability $\mu_\gamma^d\otimes p_b$ where $\gamma>0$ and $p_b$ is an absolutely continuous measure such that $\mathbb{E}_{p_b} b^2<\infty$. Then $\phi\in\LpX{P}{2}$, and we have for every $x\in\X$ such that $x\neq 0$ and $\z\in\BR$,
    \begin{align*}
    \EP\left[ (D_x\sigma (w^\T x+b ) [\z])^2\right]&=\frac{(x^\T \z )^2}{\lV x \rVe^2}\mathbb{E}_{\zeta,b \sim \mu_\gamma^1\otimes p_b} \left[(\zeta^2-1)\sigma'(\zeta \lV x \rVe + b)^2 \right]+\mathbb{E}_{\zeta,b \sim \mu_\gamma^1\otimes p_b}\left[ \sigma'(\zeta \lV x \rVe + b)^2 \right].
    \end{align*}    
Furthermore, the feature map $\vphi$ of the integral kernel of $\phi$ satisfies
\[
    \Lip(\vphi) =\sup_{x\in\X}\sqrt{\mathbb{E}_{\zeta,b \sim \mu_\gamma^1\otimes p_b}[ \zeta^2\sigma'(\zeta \lV x \rVe + b)^2 ]}~.
\]
\end{theorem}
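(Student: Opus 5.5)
We first check the hypotheses so that \Cref{2 inf lip} applies. Since $w\sim\mu_\gamma^d$ has all moments and $\mathbb{E}_{p_b}b^2<\infty$, the map $(w,b)\mapsto\lV w\rVe+|b|$ lies in $\Lp{P}{2}$. Corollary \ref{cor:lip_NN} then gives $\phi\in\LpX{P}{2}$. Because $P$ is absolutely continuous (a product of absolutely continuous measures) and $\sigma'$ is measurable, assertion (ii) of Proposition \ref{prop:RNNlipinf} holds, so the assumptions of Theorem \ref{2 inf lip} are satisfied. Hence
\[
\Lip(\vphi)=\sup_{x\in\X,\z\in\BR}\lV D_x\phi(\cdot,x)[\z]\rVL{P}{2},
\]
with $D_x\sigma(w^\T x+b)[\z]=\sigma'(w^\T x+b)\,w^\T\z$. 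Note that the hyperplane $\{w^\T x+b=u\}$, for $u$ a non-differentiability point of $\sigma$, is $P$-null thanks to the density of $b$, which justifies the convention $\sigma'=0$ there.

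For the identity at fixed $x\neq0$ we use rotation invariance of the isotropic Gaussian. Let $e=x/\lV x\rVe$ and decompose $\z=(e^\T\z)e+\z_\perp$ with $\z_\perp\perp e$. Then $\zeta:=w^\T e\sim\mu_\gamma^1$, and $\eta:=w^\T\z_\perp$ is a centered Gaussian independent of $\zeta$ and of $b$, with variance $\gamma^2\lV\z_\perp\rVe^2$. Since $w^\T x+b=\zeta\lV x\rVe+b$ depends only on $(\zeta,b)$, we expand
\[
(w^\T\z)^2=(e^\T\z)^2\zeta^2+2(e^\T\z)\zeta\eta+\eta^2 .
\]
The cross term vanishes in expectation by independence and $\E\eta=0$. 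The last term contributes $\gamma^2\lV\z_\perp\rVe^2\,\E[\sigma'(\zeta\lV x\rVe+b)^2]$. Writing $\lV\z_\perp\rVe^2=\lV\z\rVe^2-(e^\T\z)^2$ gives the stated formula, which corresponds to normalizing $\gamma=1$ and $\lV \z\rVe=1$ in the second term. I would need to keep track of the $\gamma^2$ factor and of $\lV\z\rVe$ carefully; read literally with $\zeta\sim\mu^1_\gamma$, the displayed identity is exact for unit $\z$ when $\gamma=1$.

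For the Lipschitz constant, set $A(x)=\E[\zeta^2\sigma'^2]$ and $B(x)=\E[\sigma'^2]$. The squared norm is affine in $t=(e^\T\z)^2\in[0,\lV\z\rVe^2]\subseteq[0,1]$, namely $t\,A+(1-t)\,B$ in the normalized form. Its maximum over $t\in[0,1]$ is therefore $\max(A,B)$, and the supremum over $\z\in\BR$ is attained on the unit sphere. The main obstacle is to show that $A(x)\ge B(x)$, or at least that $\sup_x\max(A,B)=\sup_x A$. For $x$ ranging over $\X$, the function $B(x)$ is recovered by taking $x\to 0$ or by comparing $\zeta$ with $|\zeta|$. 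My plan is to use Gaussian integration by parts:
\[
\E[(\zeta^2-1)g(\zeta)]=\E[\zeta g'(\zeta)]
\]
for $g(\zeta)=\E_b\sigma'(\zeta r+b)^2$, smoothed by the density of $b$. Alternatively I would show that $\sup_x A\ge\sup_x B$ by a limiting or scaling argument. The point $x=0$, if it lies in $\X$, also needs care: there the expression is $\E[(w^\T\z)^2]\,\E\sigma'(b)^2$, which equals the value of $A$ at $\lV x\rVe=0$, so the formula extends by continuity of $A$ in $\lV x\rVe$ (via dominated convergence, using boundedness of $\sigma'$). This comparison $A$ versus $B$ is the step I expect to be hardest and possibly to require an additional argument.
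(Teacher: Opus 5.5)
Your check of the hypotheses and your computation of the identity follow the paper's proof: Corollary~\ref{cor:lip_NN} and Proposition~\ref{prop:RNNlipinf}(ii), then splitting $w$ along $x/\|x\|$ and using Gaussian independence. You are right that this computation yields $\frac{(x^\top z)^2}{\|x\|^2}\,\E[(\zeta^2-\gamma^2)\sigma'^2]+\gamma^2\|z\|^2\,\E[\sigma'^2]$, which is also what the paper's proof derives, so the display in the statement is exact only for $\gamma=1$, $\|z\|=1$. The gap is the step you flag and leave open. Write, as you do, $A(r)=\E[\zeta^2\sigma'(\zeta r+b)^2]$ and $B(r)=\E[\sigma'(\zeta r+b)^2]$ with $r=\|x\|$. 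For $d\ge 2$ the supremum over $z\in\BR$ is $\max\{A(r),\gamma^2B(r)\}$, and you never show that $\sup_{x\in\X}\max\{A,\gamma^2B\}=\sup_{x\in\X}A$. So the Lipschitz formula is not established. The paper's proof does not close this either: it evaluates the quadratic form at $z=x/\|x\|$, which is the maximizer only when $A(\|x\|)\ge\gamma^2B(\|x\|)$, and it never justifies that sign.

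The missing step is false in this generality, so neither of your plans can succeed without extra hypotheses. Carry out your integration by parts in the scale variable instead: differentiate the $\mathcal{N}(0,r^2\gamma^2)$ density of $r\zeta$ with respect to $r$, which avoids differentiating the merely measurable $\sigma'$. This gives $A(r)-\gamma^2B(r)=\gamma^2 rB'(r)$ for $r>0$, i.e.\ $A=\gamma^2\frac{d}{dr}[rB(r)]$. Hence $z=x/\|x\|$ is optimal exactly where $B$ is nondecreasing.

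Here is a counterexample. Take $d\ge2$, $\gamma=1$, $\sigma(u)=\max(-1,\min(u,1))$, $b\sim\mathcal{U}[-\epsilon,\epsilon]$, and $\X$ the open ball of radius $1/2$ centred at $(2,0,\dots,0)$. Then $B$ is strictly decreasing. For unit $z\perp x$, dominated convergence on difference quotients alone gives $\Lip(\vphi)^2\ge\E[(w^\top z)^2\sigma'(w^\top x+b)^2]=B(\|x\|)$. As $\epsilon\to0$, $\sup_{\X}B\to\mathbb{P}(|\zeta|<2/3)\approx0.50$, whereas $\sup_{\X}A\approx0.07$. So the claimed formula underestimates $\Lip(\vphi)$.

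What your computation does prove for $d\ge2$ is $\Lip(\vphi)^2=\sup_{x\in\X}\max\{A(\|x\|),\gamma^2B(\|x\|)\}$. The identity $\gamma^2r_0B(r_0)=\int_0^{r_0}A(r)\,dr$ makes your ``$x\to0$'' idea precise. It reduces this to the stated $\sup_{x\in\X}A(\|x\|)$ whenever $\inf_{x\in\X}\|x\|=0$, e.g.\ when $0\in\X$. The stated formula also holds for $d=1$, and whenever $B$ is nondecreasing on $\{\|x\|:x\in\X\}$. That last case covers the ReLU and RFF examples, where $B$ is constant.
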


Next, we expose two important examples of infinite random neural networks, the random Fourier features and the ReLU random neural network. In both cases, $\X$, $\Om$, and $p_w$ are defined as above.

\begin{exmp}[Random Fourier Feature (RFF)]
\label{GRFF}
Let $p_b$ be the uniform measure on $[0,2\pi]$. Consider that $\phi_\text{RFF}$ is defined with $\sigma = \sqrt{2}\cos$. Then, the integral kernel of $\phi_\text{RFF}$ is the Gaussian kernel $k_\text{RFF}$ (\cite{rahimi_random_nodate}) defined as
\begin{align*}
        k_\text{RFF}:\RR^d\times\RR^d &\rightarrow \RR\\
        (x,x') &\mapsto e^{-\frac{\gamma^2}{2}\lV x-x'\rVe^2}~.
\end{align*}
Let $\varphi_\text{RFF}$ be the features maps associated with $k_\text{RFF}$.
We obtain from \Cref{propRNN} that \[\Lip(\varphi_\text{RFF}) = \gamma~.\] This result is a well known fact for Gaussian kernels. A more detailed analysis on random Fourier Features is given in Section \ref{sec:RFF}.
 \end{exmp}

\begin{exmp}[ReLU Random Neural Network]
\label{relu}
Let $p_b$ be an arbitrarily symmetric probability measure and $\phi_\text{ReLU}$ be defined with the ReLU activation function $\sigma : x\mapsto \max(x,0)$.
 The integral kernel $k_\text{ReLU}$ of $\phi_\text{ReLU}$ is known as the \textit{neural network Gaussian process kernel} \cite{nnGP}. By \Cref{propRNN} we get \[\Lip(\varphi_\text{ReLU}) = \frac{ \gamma}{\sqrt{2}}~.\] If we consider the classical Lipschitz bound for multi-layer perceptron as in \cite[Proposition 1]{virmaux_lipschitz_2018}, it leads to the upper bound $\gamma$. 
\end{exmp}

We remark that in both of the examples, the choice of $\X$ does not matter in the above computations. 
In fact, to compute $\Lip(\varphi_\text{RFF})$, we use the property that the phase $b$ is uniformly random on $[0, 2 \pi]$, and therefore $\mathbb{E}_{\zeta,b \sim \mu_\gamma^1\otimes p_b}[ \zeta^2\sigma'(\zeta \lV x \rVe + b)^2 ] = 
 2\mathbb{E}_{\zeta,b \sim \mu_\gamma^1\otimes p_b}[ \zeta^2 \sin^2(b) ] = \gamma^2, \forall x \in \X$.
 To obtain $\Lip(\varphi_\text{ReLU})$, we use the symmetry property of $\mu_\gamma^1 $ and $p_b$ to compute
 \begin{align*}
    \mathbb{E}_{\zeta,b \sim \mu_\gamma^1\otimes p_b}[ \zeta^2\sigma'(\zeta \lV x \rVe + b)^2 ] 
    &= \mathbb{E}_{\zeta,b \sim \mu_\gamma^1\otimes p_b}[ \zeta^2   \mathbbm{1}_{ \zeta \lV x \rVe + b > 0 }  ] \\
    &= \int_{0}^\infty \zeta^2   \left ( \int_{ - \zeta \lV x \rVe }^\infty   p_b (u) du +   \int_{  \zeta \lV x \rVe }^\infty   p_b (u) du     \right ) \mu_\gamma^1 (d \zeta )  = \frac{\gamma^2}{2}. 
 \end{align*}
 Therefore, the formula in \Cref{GRFF} and \Cref{relu} is valid for any non-empty, open and convex $\X \subset \R^d$.

\section{Application to continuous and shift-invariant kernels} \label{sec:RFF}
Let $k$ be a continuous, positive definite, and shift-invariant (i.e. there exists $\kappa$ such that for every $x,y\in\X$, $k(x,y) = \kappa(x-y)\in\RR$) kernel on an open and convex subset $\X\subset\RR^d$. According to \cite[Theorem 1.4.3]{rudin1962fourier}, there exists a non-negative finite measure $\mu$ such that for all $x,y\in \X$, \[
k(x,y) = \int_{\RR^d} e^{jw^\T (x-y)} d\mu(w) ~.
\]
We assume throughout this section $\mu(\RR^d) = \int_{\RR^d} d\mu(w) = \kappa(0) \in (0, \infty)$. Hence, $p_w = \frac{\mu}{\kappa(0)}$ is a probability measure on $\RR^d$. In the following, we call $\mu$ and $p_w$ respectively the Fourrier transform and the normalized Fourier transform of $\kappa$. Thus, we have
\begin{align*}
    k(x,y) &= \E_{p_w} \kappa(0)e^{jw^\T (x-y)}\\
    &= \E_{p_w} \kappa(0)\cos\left(w^\T (x-y)\right)+ j\sin\left(w^\T (x-y)\right)\\
    &= \E_{p_w} \kappa(0)\cos\left(w^\T (x-y)\right),
\end{align*}
 where $p_w$ is the normalized Fourier transform of $\kappa$. The term with sinus vanished since $k$ is real ($p_w (A) = p_w (-A)$ for any measurable $A \subset \R^d$).

 In order to write this kernel as an integral kernel, we will use the same trick as in \cite{rahimi_random_nodate}. Note that for all $a,c\in\RR$, \begin{align*}
 \E_{p_b} \cos(a+b) \cos(c+b) &=  \E_{p_b} \frac{1}{2} \left(\cos(a-c) + \cos(a+c+2b)\right) 
 = \frac{1}{2} \cos(a-c) ,
  \end{align*}
where $p_b$ is the uniform law on $[0,2\pi]$.
Finally, we have \[k(x,y) = \E_{w\sim p_w, b\sim p_b} \sqrt{2\kappa(0)}\cos(w^\T x +b)\sqrt{2\kappa(0)}\cos(w^\T y +b).\]
In \cite[Example C.2]{shift_inv}, it is shown that this kind of kernel is such that $\vphi$ is $\lambda_{\max} \left(\kappa(0)\E_{p_w} w w^\T\right)^\frac{1}{2}$-Lipschitz continuous when $\E_{p_w} \lV w \rV^2$ is finite. In the following theorem, we show that $\E_{p_w} \lV w \rV^2<\infty$ is also a necessary condition and that $\lambda_{\max} \left(\kappa(0)\E_{p_w} w w^\T\right)^\frac{1}{2}$ is the Lipschitz constant.
\begin{theorem}
\label{THMRFF}
Let $k(x,y)=\kappa(x-y)$ be a continuous, shift-invariant, positive definite kernel, and let $p_w$ denote the normalized Fourier transform of $\kappa$.

\noindent
If $\E_{p_w}\|w\|^2<\infty$, then the feature map $\vphi$ is Lipschitz continuous, $\kappa$ is twice differentiable on $0$, and
\[
\Lip(\vphi)
= \sqrt{\kappa(0)\,\lambda_{\max}\!\bigl(\Cov(w)\bigr)}
= \sqrt{\lambda_{\max}\!\bigl(-\nabla^2 \kappa(0)\bigr)}.
\]
Otherwise, $\Lip(\vphi)=+\infty$.

\noindent 
Here, $\Cov(w)=\E_{w\sim p_w}[ww^\top]$ and $\lambda_{\max}(\cdot)$ denotes the largest eigenvalue.
\end{theorem}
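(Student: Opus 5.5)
We write $k$ as the integral kernel of $\phi(\om,x)=\sqrt{2\kappa(0)}\cos(w^\T x+b)$ with $\om=(w,b)\sim P:=p_w\otimes p_b$ and $p_b$ uniform on $[0,2\pi]$, as derived just before the statement. Then $\phi(\om,\cdot)$ is everywhere Fréchet differentiable, with
\[
D_x\phi(\om,x)[\z]=-\sqrt{2\kappa(0)}\,\sin(w^\T x+b)\,w^\T \z,
\]
which is jointly continuous, hence measurable. Averaging over $b$ first, using $\E_{p_b}\sin^2(a+b)=\tfrac12$ for every $a\in\RR$, we get for every $x\in\X$ and $\z\in\RR^d$
\[
\lV D_x\phi(\cdot,x)[\z]\rVL{P}{2}^2=\kappa(0)\,\E_{p_w}(w^\T\z)^2=\kappa(0)\,\z^\T\Cov(w)\,\z ,
\]
with values in $[0,+\infty]$ and independent of $x$. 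This single computation drives both cases of the theorem.

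\emph{Case $\E_{p_w}\lV w\rV^2<\infty$.} We apply \Cref{2 inf lip}. Hypotheses (ii) and (iii) are already checked. For (i), the mean value bound gives $\Lip(\phi(\om,\cdot))\le\sqrt{2\kappa(0)}\lV w\rV=:F(\om)$, and $F\in\Lp{P}{2}$ precisely because of the second-moment assumption. Theorem \ref{2 inf lip} then gives
\[
\Lip(\vphi)^2=\sup_{\z\in\BR}\kappa(0)\,\z^\T\Cov(w)\,\z=\kappa(0)\,\lambda_{\max}(\Cov(w)),
\]
by the Rayleigh quotient, since $\Cov(w)$ is symmetric positive semidefinite.

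For the Hessian identity, write $\kappa(t)=\kappa(0)\E_{p_w}\cos(w^\T t)$. Since $|\partial_t\cos(w^\T t)|\le\lV w\rV$ and $|\partial^2_t\cos(w^\T t)|\le\lV w\rV^2$, and both bounds are integrable, dominated convergence allows differentiating twice under the expectation. Hence $\kappa\in C^2$ and
\[
\nabla^2\kappa(0)=-\kappa(0)\,\E_{p_w}ww^\T=-\kappa(0)\Cov(w).
\]
Alternatively, one could invoke the second expression in \Cref{2 inf lip} via $D_xD_yk(x',x')=-\nabla^2\kappa(0)$, but the direct computation also delivers the twice-differentiability claim.

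\emph{Case $\E_{p_w}\lV w\rV^2=\infty$.} Since $\sum_i\E_{p_w} w_i^2=\infty$, some coordinate $i$ satisfies $\E_{p_w}w_i^2=\infty$. Taking $\z=e_i\in\BR$ in the displayed identity gives
\[
\sup_{x\in\X,\,\z\in\BR}\lV D_x\phi(\cdot,x)[\z]\rVL{P}{2}=+\infty.
\]
The hypotheses of \Cref{2 inf} (differentiability and measurability) hold without any moment condition, so $\Lip(\vphi)=+\infty$.

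The main point requiring care is the uniform-in-$x$ evaluation of the $L^2$ norm. The $b$-averaging is legitimate by Tonelli, since the integrand is nonnegative even when its integral is infinite; this is exactly what makes the infinite case go through. Apart from that, everything reduces to the two earlier theorems.
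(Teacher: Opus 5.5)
Your proposal is correct and follows essentially the same route as the paper: average over $b$ to get $\lV D_x\phi(\cdot,x)[\z]\rVL{P}{2}^2=\kappa(0)\,\E_{p_w}(w^\T\z)^2$, then apply Theorem~\ref{2 inf lip} with the Rayleigh quotient and differentiate twice under the expectation when the second moment is finite. When it is infinite, pick a coordinate with $\E_{p_w}w_i^2=\infty$ and invoke Theorem~\ref{2 inf}. The differences are cosmetic: you verify the hypotheses of Theorem~\ref{2 inf lip} directly instead of through Proposition~\ref{prop:RNNlipinf}, and you do the Tonelli computation once for both cases. Since the statement defines $\Cov(w)$ as $\E[ww^\T]$, you also rightly skip the paper's remark that $\E_{p_w}w=0$.
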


The proof is given in Appendix \ref{thmRFF}.

\begin{exmp} (Gaussian)
\label{ani_gauss}
Consider the anisotropic Gaussian kernel defined by
\[
k(x,y)=\kappa(x-y), \qquad x,y\in\X,
\]
where, for all $\Delta\in\RR^d$,
\[
\kappa(\Delta)=\exp\!\left(-\tfrac12\,\Delta^\T\Sigma\Delta\right),
\]
and $\Sigma$ is a symmetric positive definite matrix.

The Fourier transform of $\kappa$ is the multivariate Gaussian distribution
$\mathcal{N}(0,\Sigma)$ \cite{rff_classic}. Consequently, the associated normalized
Fourier measure $p_w$ satisfies $\Cov(w)=\Sigma$. Applying \Cref{THMRFF} yields
\[
\Lip(\vphi)=\sqrt{\lambda_{\max}(\Sigma)}.
\]
\end{exmp}

\begin{exmp} (Matèrn)
\label{mattern}
Let $\nu>0$ and let $\Sigma$ be a symmetric positive definite matrix. The anisotropic
Matérn kernel is defined by
\begin{align*}
k(x,y) &= \kappa(x-y), \\
\kappa(\Delta) &=
\frac{2^{1-\nu}}{\Gamma(\nu)}
\left(\sqrt{2\nu}\,\Delta^\T\Sigma\Delta\right)^\nu
J_\nu\!\left(\sqrt{2\nu}\,\Delta^\T\Sigma\Delta\right),
\end{align*}
for $x,y\in\X$ and $\Delta\in\RR^d$, where $J_\nu$ denotes the modified Bessel function of the second kind \cite{rff_classic}.
Moreover, one has $\kappa(0)=1$. Indeed, letting
$z=\sqrt{2\nu}\,\Delta^\top\Sigma\Delta$, we can write
\[
\kappa(\Delta)
=
\frac{2^{1-\nu}}{\Gamma(\nu)}\, z^\nu J_\nu(z).
\]
Using the asymptotic behavior
$z^\nu J_\nu(z)\to 2^{\nu-1}\Gamma(\nu)$ as $z\to 0$
(see \cite[Eq.~10.30.2]{dlmf}),
we obtain $\kappa(\Delta)\to 1$ as $\Delta\to 0$.
According to \cite{rff_classic}, the Fourier transform of $\kappa$ can be written as

\[
\mu(w)
=
\left(1+\frac{w^\T\Sigma w}{2\nu}\right)^{-\frac{d+2\nu}{2}}.
\]
Actually, once normalized, it is the density of the multivariate
Student distribution $t_{2\nu}(0,\Sigma^{-1})$. 
When $\nu>1$, it is known \cite{student} that
$w\sim t_{2\nu}(0,\Sigma^{-1})$ satisfies
$\E_{p_w}\lV w\rVe^2<\infty$, with covariance matrix
\[
\Cov(w)=\frac{2\nu}{2\nu-2}\,\Sigma^{-1}.
\]
Applying \Cref{THMRFF}, we obtain 
\[
\Lip(\vphi)=\sqrt{\frac{\nu}{\nu-1}\,\lambda_{\max}(\Sigma^{-1})}.
\]
On the other hand, if $\nu\le1$, the second-order moment of $p_w$ is infinite and therefore $\vphi$ is not Lipschitz continuous. To see this explicitly, consider the isotropic case $\Sigma=I_d$. Using polar coordinates,
we may write
\[
\E_{p_w}\lV w\rV^2
\propto
\int_0^{\infty}
r^2\left(1+\frac{r^2}{2\nu}\right)^{-\frac{d+2\nu}{2}}r^{d-1}\,dr.
\]
As $r\to\infty$, the integrand is equivalent to
$r\mapsto 2\nu^{\frac{d+2\nu}{2}}\,r^{1-2\nu}$, which is integrable if and only if $\nu>1$.
\end{exmp}

\begin{exmp} (Laplace)
The isotropic Laplace kernel is defined by
\[
k(x,y)=\kappa(x-y)=\exp\!\bigl(-\lV x-y\rVe\bigr),
\qquad x,y\in\X.
\]
Its Fourier transform is proportional to the density of a Cauchy distribution
\cite{rff_classic}:

\[
\mu(w)
=
\left(1+\lV w\rV^2\right)^{-\frac{d+1}{2}}.
\]

It is well known that the Cauchy distribution does not admit a finite second-order moment.
This can also be recovered as a special case of the Matérn family with parameter
$\nu=\tfrac12$. Consequently, the associated feature map $\vphi$ is not Lipschitz
continuous.
\end{exmp}

\section{Numerical illustration and an open question on finite random features}
\label{sec:num}

This section has two objectives. First, we provide a numerical illustration of the main results of this article, which establish exact Lipschitz constants for several integral kernel feature maps. Second, we empirically investigate an open question concerning the asymptotic behavior of random feature approximations when the integral kernel in \Cref{integral_kernel} is computed from finite random features.
When there are a large number of random features, \cite{iaaf009} provides an upper and lower bound 
of $\Lip(\ha)$ when the weights of each layer of $\ha$ are random. 
In our case, we do not assume that the weights of the last layer is random 
so that it can be optimized as in kernel ridge regression. 

\begin{definition}[Random features map]
\label{def:rff}
Let $\phi\in L^2(\Omega\times\X,P)$ and let $(\omega_i)_{i\geq1}$ be an i.i.d.\ sequence with distribution $P$. For any $N\in\mathbb{N}$, we define the random feature map $\theta_N:\X\to\mathbb{R}^N$ by
\[
\theta_N(x)=\frac{1}{\sqrt{N}}
\begin{bmatrix}
\phi(\omega_1,x)\\
\phi(\omega_2,x)\\
\vdots\\
\phi(\omega_N,x)
\end{bmatrix}.
\]
\end{definition}

The associated empirical kernel $k_N(x,x')=\theta_N(x)^\top\theta_N(x')$ approximates the integral kernel $k$ induced by $\phi$. By the law of large numbers, $k_N(x,x')\to k(x,x')$ almost surely for all $x,x'\in\X$, and the convergence is uniform on compact sets under mild assumptions \cite{rahimi_random_nodate}. This approximation plays a central role in scalable kernel methods, where kernel ridge regression with $k_N$ reduces to linear regression in $\mathbb{R}^N$. 

While convergence properties of $k_N$ are well understood, much less is known about the robustness properties of random feature maps. In particular, it is natural to ask whether the Lipschitz constant of $\theta_N$ converges to that of the infinite-dimensional feature map $\vphi:\X\to\HH$, defined by $\vphi(x)=k(\cdot,x)$, i.e. 

\medskip
\noindent\textbf{Open question.}
Does $\Lip(\theta_N)$ converge to $\Lip(\vphi)$, in probability or almost surely, as $N\to\infty$?

\medskip
To explore this question numerically, we adopt a quantile-based formulation. Fix $\delta\in(0,1)$ (here $\delta=0.9$), and define $t_N$ such that
\[
\delta=\mathbb{P}\!\left( \Lip(\theta_N)\leq \Lip(\vphi)+t_N \right).
\]
Based on the preliminary results in \cite{reverdi:tel-05158402} using empirical process theory, we expect that $t_N\to0$ as $N\to\infty$.

We consider $\X=(-1,1)$ and estimate $t_N$ empirically using Monte Carlo simulations. For each value of $N$, we generate $I=3000$ independent realizations of the random feature map $\theta_N^{(i)}$. For each realization, we compute an empirical estimate of the Lipschitz constant,
\[
\widehat{\Lip}(\theta_N^{(i)})=\max_{j=1,\dots,99} \left\| (\theta_N^{(i)})'(x_j) \right\|,
\qquad x_j=-1+\tfrac{2j}{100},
\]
which provides a reliable approximation due to the regularity of the functions involved. The empirical distribution  estimated from $\{\widehat{\Lip}(\theta_N^{(i)})\}_{i \leq N}$ allows us to compute an empirical quantile $\widehat{t}_N$.

We perform this experiment for three representative integral kernels studied in this paper: random Fourier features of the Gaussian kernel, random Fourier features of the Matérn kernel, and one-hidden-layer ReLU networks. Figures~\ref{fig:Liprff}, \ref{fig:Liprelu}, and \ref{fig:Lipmattern} show the evolution of $\widehat{t}_N$ as a function of $N$. In all cases, we observe a clear convergence towards $0$. The exact Lipschitz constant $\Lip(\vphi)$ derived in the previous sections is used to compute $\widehat{t}_N$, providing strong numerical evidence in support of the conjectured convergence of $\Lip(\theta_N)$.
\begin{figure}[ht]
    \centering
    \begin{subfigure}{0.45\textwidth}
        \centering
        \includegraphics[width=\textwidth]{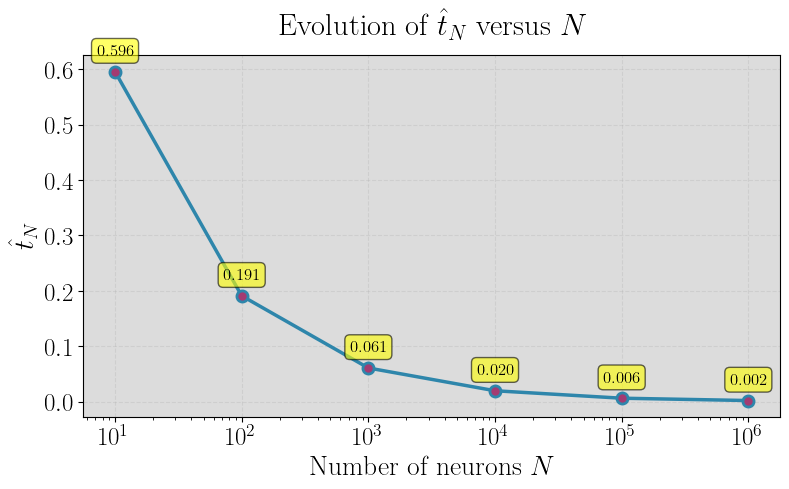}
        \caption{Random Fourier features of Gaussian kernel.}
        \label{fig:Liprff}
    \end{subfigure}
    \hfill
    \begin{subfigure}{0.45\textwidth}
        \centering
        \includegraphics[width=\textwidth]{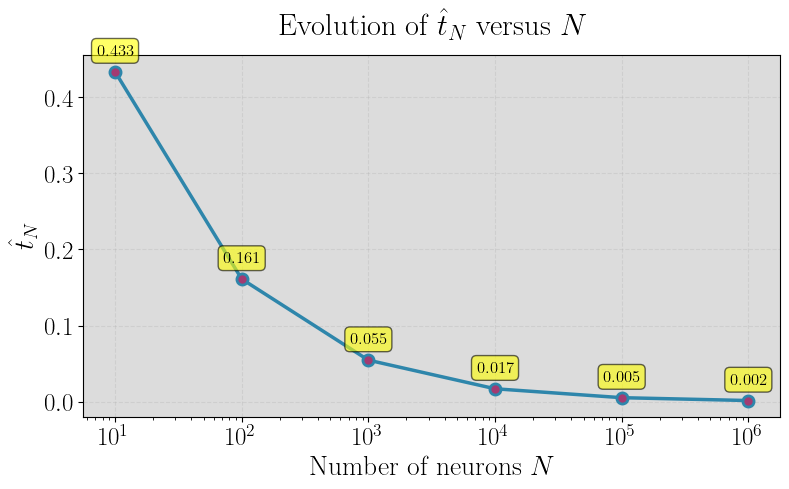}
        \caption{ReLU neural networks.}
        \label{fig:Liprelu}
    \end{subfigure}

    \medskip

    \makebox[\textwidth][c]{%
        \begin{subfigure}{0.45\textwidth}
            \centering
            \includegraphics[width=\textwidth]{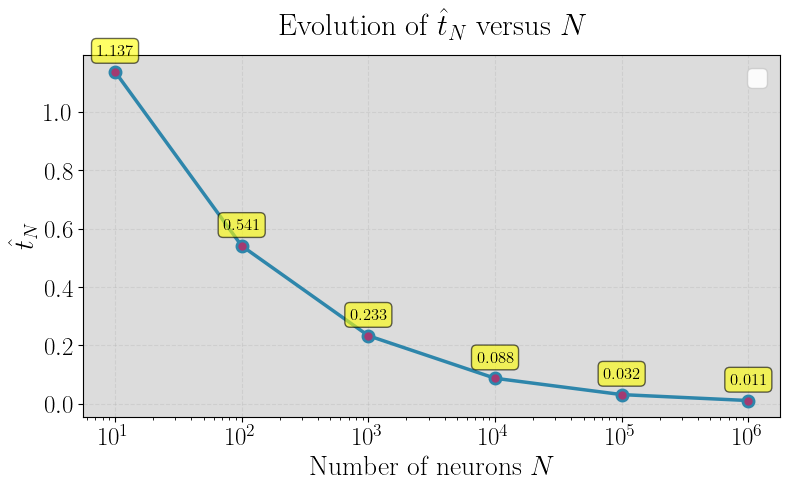}
            \caption{Random Fourier features of Matérn kernel.}
            \label{fig:Lipmattern}
        \end{subfigure}
    }

    \caption{Convergence of the estimated Lipschitz quantile $\hat{t}_N$ towards $0$ as $N$ grows on \Cref{GRFF}, \Cref{relu} and \Cref{mattern}.}
    \label{fig:Lip}
\end{figure}

\section{Conclusion}
\label{sec:con}
In this paper, we investigated the regularity properties of the feature map associated with integral kernels under differentiability assumptions. We established sufficient conditions ensuring the Lipschitz continuity of feature map and derived explicit expressions for the corresponding Lipschitz constants. Within the general framework of integral kernels, we also identified a sufficient condition under which the feature map fails to be Lipschitz continuous.
Based on these results, we obtain analytical formula for the Lipschitz constants for two widely used kernel families: 1). infinite neural network kernels associated with isotropic Gaussian distributions. 2). continuous and shift-invariant kernels. In the first case, we showed that the Lipschitz constant can be expressed as the supremum of a two-dimensional integral, which allows us to identify the Lipschitz constants for Gaussian kernel and ReLU random neural network. In the second case, we proved that the associated feature map is Lipschitz continuous if and only if the weight distribution admits a finite second-order moment. Moreover, we related the Lipschitz constant to the largest eigenvalue of the covariance matrix of the weights, or equivalently to the Hessian of the kernel at the origin.

Finally, we studied numerically an open question concerning the asymptotic behavior of random feature approximations, namely the convergence of the Lipschitz constant of a random feature map $\theta_N$ towards that of the corresponding integral feature map. We provided three numerical examples supporting this behavior, showing that a suitable quantile of the difference between the two Lipschitz constants converges to zero as the number of random features increases. This indicates that when the $\Lip(\vphi)$ is analytically computable, we could use this asymptotic result to control $\Lip(\theta_N)$. This is particularly desirable in high-dimension since $\Lip(\theta_N)$ can be NP-hard to compute \cite[Theorem 2]{virmaux_lipschitz_2018}. On the other hand, 
computing numerically the Lipschitz constant of an integral feature map $\vphi$ is in general challenging, since $\vphi$ takes values in an infinite-dimensional Hilbert space. In this case, one could hope to use the computational tools \cite{fazlyab_efficient_2019,latorreLipschitzConstantEstimation2019} for $\Lip(\theta_N)$ to obtain a tight Lipschitz bound of $\vphi$ .

\appendix
\counterwithin{equation}{subsection}
\renewcommand{\theequation}{\Alph{section}.\arabic{subsection}.\arabic{equation}}

\counterwithin{theorem}{subsection}

\section{Notation}
\begin{table}[h!]
\small
\centering
\begin{tabular}{lll}
\toprule
\textbf{Category} & \textbf{Symbol} & \textbf{Description} \\ \midrule
\multicolumn{3}{l}{\textbf{General Notations}} \\
 & $\X$ & Input space \\
 & $d$ & Input dimension when finite \\
 & $N$ & Number of neurons \\

\multicolumn{3}{l}{\textbf{Kernel Methods}} \\
 & $k$ & Kernel function \\
 & $\HH$ & RKHS associated with $k$ \\
 & $\vphi : \X \mapsto \HH$ & Feature map \\

\multicolumn{3}{l}{\textbf{Linear Algebra}} \\
 & $\Span$ & Linear span \\
 & $\lV \cdot \rVe$ & Euclidian norm on $\mathbb{R}^d$ \\
 & $\lb x,y \rb = x^\T y $ & Canonical inner product on $\mathbb{R}^d$ with $x,y\in\RR^d$ \\
 & $A^\T$ & The transpose matrix of $A$ \\

\multicolumn{3}{l}{\textbf{Banach Spaces $E$ and $F$}} \\
 & $\lV \cdot \rV_E$ & Norm in $E$ \\
 & $\BE$ & Closed unit ball of $E$  \\
 & $\lV T \rVop = \sup_{\z \in \BE, z \neq 0} \frac{\lV T\z \rV_F}{\lV \z \rV_E}$ & Operator norm of a linear operator $T : E \mapsto F$  \\
 & $Df$ & Fréchet derivative of a function $f:E\mapsto F$ \\
 & $\Lip(f)$ & Lipschitz constant or optimal Lipschitz bound \\
\multicolumn{3}{l}{\textbf{Inner (or Semi-inner) Product Space $H$}} \\
 & $\lV \cdot \rV_H$ & Norm (resp. semi-norm) in $H$ \\
 & $\lb \cdot , \cdot \rb_H$ & Inner (resp. semi-inner) product in $H$ \\

\multicolumn{3}{l}{\textbf{Measure Theory}} \\
 & $(\Om,\mathcal{A},P)$ & Measure space \\
 & $\lambda^d$ & Lebesgue measure on $\RR^d$\\
 & $\mu_\gamma^d$ & Multivariate Gaussian measure on $\RR^d$\\
 & $\EP u = \int_\Om u \, dP$ & Expectation of a measurable $u:\Om \mapsto \mathbb{R}$ \\

 & $\mathcal{B}(T)$ & Borel $\sigma$-field of topological space $T$ \\
 & $\LLp{P}{2}$
 & Semi-inner product space of square integrable functions \\
 & $\Lp{P}{2}$ & Hilbert space of equivalence classes of $\LLp{P}{2}$ \\
   & $\LpX{P}{2}$   & Set of functions $\phi:\Om\times\X\rightarrow\RR$ s.t. $\forall x\in\X,\phi(\cdot,x)\in\Lp{P}{2}$ \\
 & $\lV u \rV_{\Lp{P}{p}} = \left(\int_\Om |v|^p dP\right)^{1/p}$ & $p$-norm in $\Lp{P}{p}$, $v$ a representative of $u$ and $1 \leq p\leq\infty$ \\

\bottomrule
\end{tabular}
\caption{Notation Table}
\end{table}

\section{Background}

\subsection{$\LLp{P}{2}$ and $\Lp{P}{2}$ spaces}
\label{Appendix:Lp}
Let $(\Omega,\mathcal{A},P)$ be a measure space. We consider the set $\LLp{P}{2}$ of real-valued functions on $\Omega$ that are square-integrable with respect to $P$. For $\uu,\uv \in \LLp{P}{2}$ and $\lambda \in \mathbb{R}$, we define pointwise addition and scalar multiplication as $(\uu+\lambda \uv)(x) = \uu(x)+\lambda \uv(x)$. Then $\LLp{P}{2}$ is a vector space. The mapping
\begin{equation}
\lb \uu,\uv \rbLL{P}{2} = \int_\Omega \uu(\omega)\uv(\omega)\,dP(\omega)~,
\end{equation}
is a semi-inner product on $\LLp{P}{2}$. We then introduce the equivalence relation $\uu \sim \uv \iff \|\uu-\uv\|_{L^2(P)}=0$ and consider the quotient space $\Lp{P}{2} := \LLp{P}{2}/\!\sim$. With the induced inner product (an abuse of notation)
\begin{equation}
\lb \uu,\uv\rbL{P}{2}:= \lb [\uu],[\uv]\rbL{P}{2} = \int_\Omega \uu(\omega)\uv(\omega)\,dP(\omega)~,
\end{equation}
the space $(\Lp{P}{2}, \lb\cdot,\cdot\rbL{P}{2})$ is a Hilbert space. Note that the inner product between two equivalence classes $[\uu]$ and $[\uv]$ coincides with the integral of the product of their representatives. 

\subsection{Kernel methods and Reproducing Kernel Hilbert Space (RKHS)}
\label{Appendix:kernel}

In this section, we present some basic definitions and properties of RKHS. First, consider an arbitrary input space $\X$.
\begin{definition}
    A function $k:\X\times \X\rightarrow\mathbb{R}$ is positive definite and symmetric (PDS) if for any $m\in\mathbb{N}$ and $\{x_1,\ldots,x_m\}\subset \X$, the matrix $\left(k(x_i,x_j)\right)_{1\leq i,j\leq m} $ is positive semi-definite and symmetric.
\end{definition}

\begin{definition}[\cite{rkhs_base}]
    An RKHS $\mathcal{H}$ on $\X$ is a Hilbert space of functions in $\mathbb{R}^{\X}$ where the evaluation functions are continuous. That is, for every $x\in\X$, $h\mapsto h(x)$ is continuous for the topology induced by the inner product of $\mathcal{H}$. 
\end{definition}
\begin{theorem}[Aronszajn-Moore Theorem and Uniqueness of the RKHS \cite{rkhs_base}]
\label{moore}
    A PDS kernel induces a unique RKHS $\HH$ and the feature map $\vphi:x\mapsto k(x,\cdot)$ such that the reproducing property holds, that is, for every $x\in\X$ and $h\in\HH$, $f(x) = \lb h,\vphi(x)\rb_\HH$ where $\lb\cdot,\cdot\rb_\HH$ is the inner product of $\HH$.  
    \end{theorem}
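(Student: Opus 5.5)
The plan is the classical construction: build a pre-Hilbert space from finite combinations of kernel sections, complete it inside $\RR^\X$, and then prove uniqueness by a density argument.

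\emph{Pre-Hilbert space.} Let $H_0=\Span\{k(x,\cdot):x\in\X\}\subset\RR^\X$. For $f=\sum_i a_i k(x_i,\cdot)$ and $g=\sum_j b_j k(y_j,\cdot)$ set
\[
\lb f,g\rb_0=\sum_{i,j}a_ib_jk(x_i,y_j).
\]
This form is well defined because it equals $\sum_j b_j f(y_j)=\sum_i a_i g(x_i)$, so it depends only on the functions $f$ and $g$ and not on their representations. It is bilinear, and it is symmetric and positive semi-definite because $k$ is PDS. By construction it satisfies the reproducing property $f(x)=\lb f,k(x,\cdot)\rb_0$ on $H_0$. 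Cauchy--Schwarz for semi-inner products then gives
\[
|f(x)|^2\le \lb f,f\rb_0\,k(x,x).
\]
Hence $\lb f,f\rb_0=0$ forces $f\equiv 0$, so $\lb\cdot,\cdot\rb_0$ is a genuine inner product on $H_0$.

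\emph{Completion inside $\RR^\X$.} Let $(f_n)\subset H_0$ be Cauchy. The inequality above shows that $(f_n(x))$ is Cauchy for every $x$, so $f_n$ has a pointwise limit $f$. Let $\HH$ be the set of all such pointwise limits, and define $\lV f\rVH=\lim_n\lV f_n\rV_0$. The main obstacle is showing this limit is well defined, which amounts to the following: if $(f_n)$ is Cauchy in $H_0$ and $f_n\to0$ pointwise, then $\lV f_n\rV_0\to0$. I would argue as follows. Fix $\epsilon>0$ and choose $N$ with $\lV f_n-f_m\rV_0<\epsilon$ for $n,m\ge N$. Write $f_N=\sum_i a_ik(x_i,\cdot)$, a finite sum. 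Then for $n\ge N$,
\[
\lV f_n\rV_0^2=\lb f_n,f_n-f_N\rb_0+\sum_i a_i f_n(x_i).
\]
The first term is at most $\epsilon\sup_m\lV f_m\rV_0$, and $\sup_m\lV f_m\rV_0<\infty$ since $(f_m)$ is Cauchy. The second term tends to $0$ by pointwise convergence. Since $\epsilon$ is arbitrary, $\lV f_n\rV_0\to0$.

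With this in hand, the remaining steps are routine. The inner product on $\HH$ is obtained by polarization. The space $\HH$ is complete, because $H_0$ is dense in it and it is isometric to the abstract completion of $H_0$ (the map from the abstract completion to $\RR^\X$ is injective by the lemma above). The reproducing property extends by continuity: $f(x)=\lim_n f_n(x)=\lim_n\lb f_n,k(x,\cdot)\rb=\lb f,k(x,\cdot)\rbH$. Consequently evaluation is continuous, with $|f(x)|\le\lV f\rVH\sqrt{k(x,x)}$, so $\HH$ is an RKHS with feature map $\vphi$.

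\emph{Uniqueness.} Let $\mathcal G$ be any Hilbert space of functions on $\X$ containing every $k(x,\cdot)$ and satisfying the reproducing property. Then $H_0\subset\mathcal G$, and the inner products on $H_0$ coincide because both are determined by $k$. If $g\in\mathcal G$ is orthogonal to $H_0$, then $g(x)=\lb g,k(x,\cdot)\rb_{\mathcal G}=0$ for every $x$, so $H_0$ is dense in $\mathcal G$. Every $g\in\mathcal G$ is therefore a norm limit of a sequence in $H_0$. That sequence is Cauchy in $H_0$, and its $\mathcal G$-limit coincides with its pointwise limit by continuity of evaluation in $\mathcal G$. Hence $g\in\HH$ with the same norm, and $\mathcal G\subset\HH$ isometrically. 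Conversely, every element of $\HH$ is the pointwise limit of a Cauchy sequence in $H_0$, whose $\mathcal G$-limit must equal that pointwise limit, so $\HH\subset\mathcal G$. Therefore $\mathcal G=\HH$ as Hilbert spaces.
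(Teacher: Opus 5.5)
The paper gives no proof of this background result and simply cites Aronszajn. Your argument is the classical Moore--Aronszajn construction from that reference: the pre-Hilbert space spanned by the sections $k(x,\cdot)$, completion inside $\mathbb{R}^{\mathcal{X}}$ via the key lemma that a Cauchy sequence in $H_0$ converging pointwise to $0$ converges to $0$ in norm, and uniqueness by density of $H_0$ together with continuity of evaluation. It is correct, and the remaining routine points (that $\HH$ is a linear subspace, that $\lV\cdot\rV_{\HH}$ is a norm, and that $f_n\to f$ in $\HH$) follow from the identification with the abstract completion that you state.
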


\subsection{Banach Analysis}

\label{Appendix:Banach}
\begin{definition}[Lipschitz continuity]
Let $(A,d_A)$ and $(B,d_B)$ be two metric spaces. Let $f:A\rightarrow B$. We say that $f$ is Lipschitz-continuous with bound $C>0$ if for all $ a,a'\in A$,\[d_B(f(a),f(a'))\leq Cd_A(a,a')~.\]
Moreover, we define the optimal Lipschitz bound by $\Lip(f) = \sup_{a\neq a'} \frac{d_B(f(a),f(a'))}{d_A(a,a')}~.$
\end{definition}
\begin{definition}[Fréchet derivative \cite {func_an} Definition 2 section 7.2]
    Let $E$ and $F$ be two Banach spaces with norms $\lV\cdot\rV_E$ and $\lV\cdot\rV_F$ and $U$ an open subset of $E$. A function $f:U\rightarrow F$ is said to be Fréchet differentiable at $x_0\in U$ if and only if there exists a bounded linear operator $A:E\rightarrow F$ such that \[\lim_{\lV \z \rV_E\rightarrow 0} \frac{\lV f(x_0+\z) - f(x_0) - A\z\rV_F}{\lV \z \rV_E} = 0~.\]
    The differential is dentoed by $Df(x_0) = A$.
\end{definition}
\begin{theorem}[\cite {func_an} Section 7.3, Proposition 2]
\label{accroissement}
    Let $E$ and $F$ be two Banach spaces with norms $\lV\cdot\rV_E$ and $\lV\cdot\rV_F$. Consider an open subset $U\subset E$ and a function $f:U\rightarrow F$ that is Fréchet differentiable on $U$. Let $y\in U$ and suppose that there exists $\z\in E$ such that $y+\alpha \z\in U$ for all $0\leq \alpha \leq 1$. Then we have  
    \[
    \lV f(y)-f(y+\z)\rV_F  \leq \sup_{0\leq \alpha \leq 1} \lV Df(y + \alpha \z)\rVop \lV \z \rV_E\leq \sup_{x\in U} \lV Df(x)\rVop \lV \z \rV_E~.
    \]
\end{theorem}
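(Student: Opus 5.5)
The plan is to reduce the vector-valued inequality to the classical one-dimensional mean value theorem by means of a norming functional given by the Hahn--Banach theorem. The second inequality in \Cref{accroissement} is immediate once the first is established: every point $y+\alpha\z$ with $0\leq\alpha\leq 1$ lies in $U$, so the supremum over the segment is bounded by the supremum over $U$. We may also assume that $\sup_{0\leq\alpha\leq1}\lV Df(y+\alpha\z)\rVop<\infty$ and $f(y+\z)\neq f(y)$, since otherwise there is nothing to prove.

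\textbf{Step 1 (norming functional).} Set $v=f(y+\z)-f(y)\in F$. By the Hahn--Banach theorem there exists a continuous linear functional $\ell\in F^*$ such that $\lV \ell\rV_{F^*}=1$ and $\ell(v)=\lV v\rV_F$.

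\textbf{Step 2 (scalar reduction).} Define $g(\alpha)=\ell\bigl(f(y+\alpha\z)\bigr)$. Since $U$ is open and the segment $\{y+\alpha\z:\alpha\in[0,1]\}$ is compact, there is some $\epsilon>0$ such that $y+\alpha\z\in U$ for all $\alpha\in(-\epsilon,1+\epsilon)$. Hence $g$ is defined on this open interval.

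The map $\alpha\mapsto y+\alpha\z$ is affine with derivative $\z$, and $\ell$ is bounded linear. The chain rule for Fréchet derivatives (or a direct estimate from the definition) then gives that $g$ is differentiable with
\[
g'(\alpha)=\ell\bigl(Df(y+\alpha\z)[\z]\bigr).
\]
In particular, $g$ is continuous on $[0,1]$ and differentiable on $(0,1)$.

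\textbf{Step 3 (real mean value theorem).} The mean value theorem on $[0,1]$ gives some $\xi\in(0,1)$ with
\[
\lV v\rV_F=\ell(v)=g(1)-g(0)=g'(\xi)\leq \lV\ell\rV_{F^*}\lV Df(y+\xi\z)\rVop\lV\z\rV_E.
\]
The right-hand side is at most $\sup_{0\leq\alpha\leq1}\lV Df(y+\alpha\z)\rVop\lV\z\rV_E$. This is the first inequality, and the left-hand side is the same as $\lV f(y)-f(y+\z)\rV_F$ by symmetry of the norm.

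The only step needing care is Step 2: checking that $g$ is genuinely differentiable, including near the endpoints. Using the openness of $U$ to extend the domain of $g$ beyond $[0,1]$ handles this. Alternatively, one only needs continuity of $g$ on $[0,1]$, which follows from Fréchet differentiability of $f$ and hence its continuity, together with differentiability in the interior.
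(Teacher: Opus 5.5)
Your proof is correct. The Hahn--Banach norming functional $\ell$ reduces the claim to the real mean value theorem for $g(\alpha)=\ell\bigl(f(y+\alpha\z)\bigr)$. The chain-rule identity $g'(\alpha)=\ell\bigl(Df(y+\alpha\z)[\z]\bigr)$ together with $\lV\ell\rV_{F^*}=1$ then yields the first inequality, and the second is immediate. The paper gives no proof of its own and simply cites the textbook result; your argument is the standard Hahn--Banach-plus-scalar-mean-value proof, which I believe is the one in that reference. Extending $g$ beyond $[0,1]$ is harmless but unnecessary, since continuity on $[0,1]$ and differentiability on $(0,1)$ already suffice.
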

\begin{proposition}
\label{prop:lip>diff}
    Consider $E$, $F$ two Banach spaces and an open and convex subset $U\subset E$. Let  $f:U\mapsto F$ be Lipschitz continuous at $U$ and Fréchet differentiable on $x_0\in E$. Then, \[\Lip(f) =\sup_{x,y\in U, x\neq y}\frac{\lV f(x) - f(y)\rV_F}{\lV x-y \rV_E} \geq \lV Df(x_0)\rVop~.\]
\end{proposition}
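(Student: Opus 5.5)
The plan is to compare $f$ with its linearization at $x_0$ along a fixed direction. Since $U$ is open and $x_0\in U$ (implicitly, so that the Fréchet derivative at $x_0$ makes sense as in the definition), there is $r>0$ such that the ball of radius $r$ around $x_0$ is contained in $U$. Fix an arbitrary $\z\in E$ with $\lV \z\rV_E=1$. For $t\in(0,r)$ the point $x_0+t\z$ lies in $U$, and $x_0+t\z\neq x_0$. The definition of $\Lip(f)$ as a supremum then gives, for every such $t$,
\[
\Lip(f)\geq \frac{\lV f(x_0+t\z)-f(x_0)\rV_F}{t}~.
\]

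Next I would insert the derivative. By the reverse triangle inequality,
\[
\frac{\lV f(x_0+t\z)-f(x_0)\rV_F}{t}\geq \lV Df(x_0)\z\rV_F-\frac{\lV f(x_0+t\z)-f(x_0)-Df(x_0)[t\z]\rV_F}{\lV t\z\rV_E}~,
\]
where I have used linearity, $Df(x_0)[t\z]=t\,Df(x_0)\z$, and $\lV t\z\rV_E=t$. As $t\to0^+$, the last term tends to $0$ by the definition of Fréchet differentiability, applied along the increments $t\z$ whose norms tend to zero. Letting $t\to0^+$ therefore yields $\Lip(f)\geq\lV Df(x_0)\z\rV_F$ for every unit vector $\z$.

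Finally I would take the supremum over unit vectors $\z$. By homogeneity of the linear map this supremum equals $\lV Df(x_0)\rVop$, which gives the claim. The case $\Lip(f)=+\infty$ is trivial, and the Lipschitz assumption only ensures that the left-hand side is finite.

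There is no real obstacle; the only point needing care is that $x_0$ must be an interior point of $U$. This is exactly why openness of $U$ is used, so that the whole segment $x_0+t\z$, $t\in(0,r)$, stays in the domain. Convexity is not needed for this inequality; it matters only for the reverse bound via \Cref{accroissement}.
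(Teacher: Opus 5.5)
Your proof is correct and follows essentially the same route as the paper: fix a direction $z$, bound the difference quotient $\|f(x_0+tz)-f(x_0)\|_F/t$ by $\Lip(f)$, compare it with $\|Df(x_0)[z]\|_F$ through the triangle inequality and the vanishing Fréchet remainder as $t\to 0$, then take the supremum over directions. Your version is slightly tidier: the bound by $\Lip(f)$ holds for each $t$, and you correctly note that $x_0$ must lie in $U$ and that convexity is not needed.
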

\begin{proof}
    Let $\z\in E$. By the convexity and openness of $U$, for $\alpha$ small enough, we have $x_0+\alpha z\in U$. Finally, we can write \begin{align*}
         \frac{\lV Df(x_0)[\z]\rV_F}{\lV \z \rV_E} &=
         \lim_{\alpha \rightarrow 0} \frac{\lV Df(x_0) [\alpha \z ]\rV_F}{\lV \alpha \z \rV_E}
         \\
         &\leq\lim_{\alpha\rightarrow 0} \frac{\lV f(x_0+\alpha \z) - f(x_0) - Df(x_0)[\alpha \z]\rV_F}{\lV \alpha \z \rV_E} + \lim_{\alpha\rightarrow 0}\frac{\lV f(x_0+\alpha \z) - f(x_0)\rV_F}{\lV \alpha \z \rV_E}\\
         &= 0+ \lim_{\alpha\rightarrow 0}\frac{\lV f(x_0+\alpha \z) - f(x_0)\rV_F}{\lV \alpha \z\rV_E}\\
         &\leq \Lip(f)~.
    \end{align*}
      Taking the supremum over all $\z\in E$ leads to the result.
\end{proof}
\begin{corollary}
\label{banach_mean}
    Consider $E$ and $F$ two Banach spaces. Let $U\subset E$ be open and convex and $f:U\mapsto F$ be Fréchet differentiable on $U$. Then $f$ is Lipschitz with $\Lip(f) =\sup_{x\in U} \lV Df(x)\rVop~.$
  
\end{corollary}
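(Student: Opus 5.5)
The statement to prove is Corollary~\ref{banach_mean}: $\Lip(f)=\sup_{x\in U}\lV Df(x)\rVop$. We prove the two inequalities separately. Set $M:=\sup_{x\in U}\lV Df(x)\rVop\in[0,+\infty]$. If $M=+\infty$, the inequality $\Lip(f)\le M$ is trivial.

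For $\Lip(f)\le M$, we take $x,y\in U$ and put $\z=x-y$. Since $U$ is convex, $y+\alpha\z\in U$ for all $\alpha\in[0,1]$. Theorem~\ref{accroissement} therefore applies and gives $\lV f(x)-f(y)\rV_F\le M\lV x-y\rV_E$. Taking the supremum over $x\neq y$ yields $\Lip(f)\le M$. In particular, whenever $M<\infty$, $f$ is Lipschitz continuous.

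For $\Lip(f)\ge M$, we may assume $\Lip(f)<\infty$, since otherwise there is nothing to prove. Then $f$ is Lipschitz on $U$ and Fréchet differentiable at every $x_0\in U$. Proposition~\ref{prop:lip>diff} gives $\lV Df(x_0)\rVop\le\Lip(f)$ for each such $x_0$, and taking the supremum over $x_0\in U$ gives $M\le\Lip(f)$.

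Combining the two inequalities gives the equality. It also shows that $f$ is Lipschitz exactly when $M<\infty$, so the phrase ``$f$ is Lipschitz'' in the statement should be read under that finiteness proviso, or with $\Lip(f)=+\infty$ allowed. The only delicate point is this case $M=+\infty$: the argument must not invoke Proposition~\ref{prop:lip>diff} before Lipschitz continuity is known, which is why the case $\Lip(f)=\infty$ is set aside above.
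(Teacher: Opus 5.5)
Your proof is correct and is essentially the paper's argument: the mean value inequality of Theorem~\ref{accroissement} along the segment from $y$ to $x$ (contained in $U$ by convexity) gives $\Lip(f)\le \sup_{x\in U}\lV Df(x)\rVop$, and Proposition~\ref{prop:lip>diff} applied at each point gives the reverse inequality. You also note that $f$ is Lipschitz only when this supremum is finite, so that otherwise the equality must be read in $[0,+\infty]$ (e.g.\ $f(x)=x^2$ on $\RR$); this is a legitimate correction that the paper's statement and proof gloss over.
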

\begin{proof}
    Thanks to Theorem \ref{accroissement} and the convexity of $U$, it is clear that for all $x\neq y$ in $U$, we have $\frac{\lV f(x) - f(y)\rV_F}{\lV x-y \rV_E} \leq\sup_{x'\in U} \lV Df(x')\rVop$. Taking the supremum on $x$ and $y$ in $U$ leads to the inequality $\Lip(f)\leq \sup_{x'\in E} \lV Df(x')\rVop$. The second inequality is derived as follows. The function $f$ is Fréchet differentiable on every $x\in U$. Applying Proposition \ref{prop:lip>diff} leads to $\Lip(f)  \geq \lV Df(x)\rVop$. Taking the supremum over all $x\in U$ gives the second inequality.
\end{proof}

\section{Proofs}
\subsection{Lipschitz constant for Integral Kernels}
\label{proof:int_kernel}

\subsubsection{Proof of \cref{cor:integral_rkhs}}
\label{app:proofThm22}

\begin{proof} 
Let $v\in V^\perp$, then for all $x\in\X$, $S(v)(x) = \lb v,\theta(x)\rbL{P}{2}=0$, that is, $v\in \textit{Ker}(S)$. The reverse inclusion holds since $\Span(\theta(\X))^\perp = (\overline{\Span(\theta(\X))})^\perp $ (see \cite[Lemma 15.1.2]{hilbert}). We deduce that $V = \text{Ker}(S)^\perp$.\\
From Proposition \ref{integralrkhs}, we know that $S$ is an isometric isomorphism from $V = \text{Ker}(S)^\perp$ to $\HH$. Hence, $\HH = S(V) \subset S(\Lp{P}{2})$. 
    Since $V$ is closed, the orthogonal of $V$ is its complement \cite{lax2014functional}. 
Thus, for $u\in\Lp{P}{2}$, there exist unique $v_1\in V$ and $v_2\in V^\perp$ such that $u=v_1+v_2$. Observe that $S(u) = S(v_1)\in \HH$, and $S(V) = S(\Lp{P}{2})$. Furthermore \begin{align*}
         \lV S(\uu) \rV _{\HH} &=  \lV \uv_1 \rVL{P}{2} \\&\leq  \lV \uv_1 \rVL{P}{2} +  \lV \uv_2 \rVL{P}{2}\\
        &=  \lV  \uu  \rVL{P}{2}~.
    \end{align*}
    So, for all $\ha\in\HH$, $ \lV \ha \rVH = \inf\{ \lV \uu \rVL{P}{2}:\uu\in S^{-1}(\ha)\}.$
\end{proof}

\subsubsection{Proof of \cref{2 inf lip}}
\label{app:proofThm25}

\begin{proof}
    Recall that $\X$ is an open and convex subset of a Banach space $(E,\lV\cdot\rV_E)$. Let $x\in\X$. For $P$-almost $\om\in\Om$, there exists a bounded linear operator $D_x\phi( \om,x):E\rightarrow \RR$ such that \[\lim_{\lV \z \rVE\rightarrow 0} \frac{|\phi(\om,x+\z) - \phi( \om,x) - D_x\phi( \om,x)[\z]|}{\lV \z \rV_E} =\lim_{\lV \z \rVE\rightarrow 0} \frac{|r(\om)[z]|}{\lV \z \rV_E} = 0~,\]
    where $r(\om)[\z] = \phi(\om,x+\z) - \phi( \om,x) - D_x\phi( \om,x)[\z]$. For $\om\in\Om$ where $\phi( \om,x)$ is not differentiable we have chosen $ D_x\phi( \om,x)=0$. As a consequence, $r(\om)[z]$ is well defined at each $\om \in \Om$, and at $z$ in the open neighborhood of $0$ in $E$ composed of vectors $z\in E$ such that $x+z\in\X$ (it is open as the pre-image of an open set by a continuous function). 
    
    We will show that the feature map $\vphi : x\mapsto k_x$ is Fréchet differentiable at any $x \in \X$. We define the operator $S:\Lp{P}{2}\rightarrow\HH$ as in Theorem \ref{integralrkhs}, that is \[S(a)(x) = \lb a,\phi(\cdot,x) \rbL{P}{2}~.\]
 Note that $S(\phi(\cdot,x))(x') = \lb \phi(\cdot,x),\phi(\cdot,x')\rbL{P}{2} = k(x,x') = \vphi(x)(x')$, so $S(\phi(\cdot,x)) = \vphi(x)$.
    Thus, we can write
    \begin{align*}
        \vphi(x+\z) &= S(\phi(\cdot,x+z))\\
        &= S\left(\om\mapsto (r(\om)[\z]+\phi( \om,x) + D_x\phi( \om,x)[\z])\right)\\
        &= \underbrace{S(r(\cdot)[\z])}_{R[\z]}+\underbrace{S(\phi(\cdot,x))}_{\vphi(x)} + \underbrace{S( D_x\phi( \cdot,x)[\z])}_{A[\z]} ~.\\
    \end{align*}
    The precedent decomposition is valid since $ D_x\phi( \cdot,x)[\z]\in \Lp{P}{2}$ because $D_x\phi( \cdot,x)[\z]$ is measurable by the hypothesis (iii) and $|D_x\phi( \om,x)[\z]|\leq F(\om)\lV z \rVE$ 
    $P$-a.e. with $F\in\Lp{P}{2}$ by the hypothesis (i) and Proposition \ref{prop:lip>diff}. 
    
    Then, $r(\cdot)[\z] = \phi(\cdot,x+z) - \phi(\cdot,x) -  D_x\phi( \cdot,x)[\z] \in \Lp{P}{2}$. We could have used the notations $r_x$, $R_x$ and $A_x$ to precise that those maps depend on $x$, but to simplify the notation we omit it. Remark also that $R[z] = \vphi(x+z) - \vphi(x) - A[z]$, so we want to show that $A$ is the Fréchet derivation of $\vphi$ at $x$. We need to show that \begin{enumerate}[(1)]
        \item $A$ is a bounded linear operator,
        \item $\lim_{\lV \z \rVE\rightarrow 0}\frac{\lV R[\z]\rVH}{\lV \z \rVE}=0$.
    \end{enumerate}
    First we prove (1). Let $\z_1,\z_2\in E$ and $\alpha\in\RR$. By the linearity of $D_x\phi( \om,x)$ and the linearity of the integral, we have
    \begin{align*}
    A[\z_1+\alpha \z_2] &=S(D_x\phi( \cdot,x)[\z_1 + \alpha \z_2])\\
    &=S (D_x\phi(\cdot,x)[\z_1])+\alpha S(D_x\phi(\cdot,x) [\z_2])\\
    &= A[\z_1]+\alpha A[\z_2]~.
    \end{align*}
    Furthermore, for all $\z\in\BE$, we have
    \begin{align}
        \lV A[\z]\rVH&=\lV S(D_x\phi( \cdot,x)[\z])\rVH \nonumber \\
        &\leq \lV D_x\phi( \cdot,x)[\z]\rVL{P}{2}\tag{Using the norm defined in \ref{integralrkhs}} \nonumber \\ 
        &\leq\lV F\rVL{P}{2}. 
        \label{eq:AnormBYFL2}
    \end{align}
Now we prove (2). We will apply the theorem of dominated convergence. Let $(\z_n)$ be a sequence that converges to $0$ in $E$. Without loss of generality, we can choose $(z_n)$ in the open neighborhood of $0$ where $r(\om)[\cdot]$ is defined. Fix $x \in \X$, and for $\om\in\Om$ define $\beta_n(\om) = \left(\frac{r(\om)[\z_n]}{\lV \z_n \rVE}\right)^2$. 

Each $\beta_n$ is measurable as a polynomial of measurable functions. The sequence $(\beta_n)$ converges almost surely to $0$ since $\phi(\om,\cdot)$ is $P$-almost surely differentiable at $x$. Finally, we need to show the domination hypothesis:
\begin{align*}
    \sqrt{\beta_n(\om)} &= \frac{|r(\om)[\z_n]|}{\lV \z_n \rVE}\\
    &\leq  \frac{1}{\lV \z_n \rVE}(|r(\om)[\z_n]+\phi( \om,x) - \phi(\om,x+\z_n)|+|\phi( \om,x) - \phi(\om,x+\z_n)|)\\
    &\leq  \frac{1}{\lV \z_n \rVE}(|D_x\phi( \om,x)[\z_n]|+\Lip(\phi(\om,\cdot))\lV \z_n \rVE)\\
    &\leq\frac{1}{\lV \z \rVE}(\lV D_x\phi( \om,x)\rVop\lV \z_n \rVE+\Lip(\phi(\om,\cdot)) \lV \z_n \rVE)\\
    & \leq  2\Lip(\phi(\om,\cdot))\tag{By Proposition \ref{prop:lip>diff}}~.
\end{align*}
Thus, for $P$-almost all $\om\in\Om$, \[\sqrt{\beta_n(\om)}\leq 2F(\om)~.\]

So, for all $n\in\mathbb{N}$, $|\beta_n|$ is dominated by an integrable function $4F^2$.

Finally we have that \begin{align*}
    \frac{\lV R[\z_n]\rVH}{\lV \z_n \rVE}&=\frac{1}{\lV \z_n \rVE}\lV S(r(\cdot)[\z_n])\rVH \\
    &\leq \frac{1}{\lV \z_n \rVE}\lV  r(\cdot)[\z_n]\rVL{P}{2}\tag{By \ref{integralrkhs}}\\
    &=\left(\int_\Om \frac{r(\om)[\z_n]^2}{\lV \z_n \rVE^2}dP(\om)\right)^{\frac{1}{2}}\\
    &=\bigg(\underbrace{\int_\Om \beta_n(\om)dP(\om)}_{\xrightarrow[n\rightarrow\infty]{} 0}\bigg)^{\frac{1}{2}}~.
\end{align*}

It is true for all sequences $(\z_n)$  converging to $0$, so we showed (2) :\[\lim_{\lV \z \rVE\rightarrow 0}\frac{\lV R[\z_n]\rVH}{\lV \z \rVE}=0~.\]
We deduce that $\vphi$ is Fréchet differentiable on $\X$ and for all $\z\in E$, we have 
\begin{equation}\label{eq:DvphiEqDSphiz}
D_x\vphi(x)[\z] = S(D_x\phi( \cdot,x)[\z])~.  
\end{equation}

As we saw earlier in \eqref{eq:AnormBYFL2}, 
for every $x\in\X$ and $\z\in\BE$, we have 
$ \lV D_x\vphi(x)[\z]\rVH\leq \lV F\rVL{P}{2}$ and so $ x \mapsto \lV D_x\vphi(x)[\z]\rVop$ is bounded. As $\X$ is open and convex, according to Corollary \ref{banach_mean}, we have that $\vphi$ is Lipschitz with $\Lip(\vphi) = \sup_{x\in\X} \lV D_x\vphi(x)\rVop$. This value is not tractable but fortunately, we can express it as a norm in $\Lp{P}{2}$. By Theorem \ref{integralrkhs} and \eqref{eq:DvphiEqDSphiz}, we know that \[\lV D_x\vphi(x)[\z]\rVH\leq \lV D_x\phi(\cdot,x)[\z]\rVL{P}{2}~.\] This is also an equality. 
To see that, we need to show that $D_x\phi(\cdot,x)[\z] \in (\textit{Ker}(S))^\perp$ 
as we know that $S:(\textit{Ker}(S))^\perp \rightarrow \HH$ is an isometry from Theorem \ref{integral_kernel}. Let $\uu\in \textit{Ker}(S)$. For every $x\in\X$, $S(\uu)(x) =0$, and hence $S(\uu)$ is Fréchet differentiable with  $D_xS(\uu)(x) =0$. 
The key element is showing that $B[\z] = \lb D_x\phi(\cdot,x)[\z], \uu\rbL{P}{2}$ is the derivative of $S(
\uu)$ at $x$. First, we see that $B$ is bounded and linear since $|B[\z]|\leq \lV F \rVL{P}{2}  \lV \uu \rVL{P}{2}$ by the Cauchy-Schwarz inequality and the linearity of $A$ in \eqref{eq:AnormBYFL2}.
Secondly, \begin{align*}
0\leq \frac{|S(\uu)(x+\z) - S(\uu)(x) - B[\z]|}{\lV \z \rV_E}&=\frac{1}{\lV \z \rV_E}\left|\lb \phi(\cdot,x+z) - \phi(\cdot,x) - D_x\phi(\cdot,x)[\z], \uu \rb_\Lp{P}{2}\right|\\
&\leq  \frac{\lV r(\cdot)[\z]\rV_\Lp{P}{2}}{\lV z \rV_E}\lV u\rV_\Lp{P}{2} \xrightarrow[\lV z \rV_E \rightarrow 0]{} 0 ~.
\end{align*}
We deduce that $D_xS(\uu)(x)[\z] =  \lb D_x\phi(\cdot,x)[\z], \uu\rb_\Lp{P}{2} =0$ for every $x\in\X$ and $z\in E$. Hence, $D_x\phi(\cdot,x)[\z]$ is 
orthogonal to $\textit{Ker}(S)$ and we can use the isometry property 
 \begin{align}
     \lV D\vphi(x)\rVop &= \sup_{\z\in \BE} \lV D\vphi(x)[\z]\rVH \nonumber \\
     &=\sup_{\z\in \BE} \lV D_x\phi(\cdot,x)[\z]\rVL{P}{2}~. \label{eq:isometryDphi}
 \end{align}

As a consequence of \eqref{eq:isometryDphi} and Corollary \ref{banach_mean}, we have
 \[\Lip(\vphi) = \sup_{x,\z\in \BE} \lV D_x\phi(\cdot,x)[\z]\rVL{P}{2}.\]
 The second formulation can be derived from the facts that $\vphi$ is Fréchet differentiable. The precedent implies that \begin{align*}
     \frac{f(x+tz) - f(x)}{t} = \lb f, \frac{\vphi(x)-\vphi(x+tz)}{t}\rbH
     \xrightarrow[t\rightarrow 0^+]{} \lb f, D_x\vphi(x)[z]\rbH  ~,
 \end{align*}
 by continuity of the inner product. So, we have $D_xf(x)[z] = \lb f, D_x\vphi(x)[z]\rbH$. Remark that, by symmetry, we have $D_x\vphi(x')[z] = D_xk(x',\cdot)[z] = D_yk(\cdot,x')[z]$, for all $x'\in\X$ and $z\in\BE$. Finally, \begin{align*}
     \lV D_x\vphi(x')[z] \rVH^2 &= \lb D_x\vphi(x')[z], D_x\vphi(x')[z]\rbH\\
     &= \lb D_yk(\cdot,x')[z],D_x\vphi(x')[z] \rbH\\
     &= D_xD_yk(x',x')[z,z] ~.
 \end{align*}
\end{proof}

\subsubsection{Proof of \cref{2 inf}}
\label{app:proofThm26}
\begin{proof}
    Suppose $\sup_{x\in\X,\z\in \BE}\lV D_x\phi(\om,x)[\z] \rVL{P}{2}=+\infty$ and let $M>0$. There exist $x\in\X$ and $z\in\BE$ such that $\lV D_x\phi(\om,x)[\z] \rVL{P}{2}>M$. Since $\X$ is open, there exists $t_0>0$ such that $x+t_0 z\in\X$. Furthermore, define $t_n=\frac{t_0}{n+1}$ for all $n\geq 1$. The sequence $(t_n)$ is such that $t_n\xrightarrow{} 0$ and $y_n = x+t_n z \in\X$ for all $n\in \mathbb{N}$. We have \begin{align*}
        \lV \vphi(x) - \vphi(y_n)\rVH^2 &= k(x,x) + k(y_n,y_n) -2k(x,y_n)\\
        &= \int_\Om \left(\phi(\om,x)^2 + \phi(\om,y_n)^2 - 2\phi(\om,x)\phi(\om,y_n) \right) dP(\om)\\
        &= \int_\Om \left(\phi(\om,x) - \phi(\om,y_n) \right)^2 dP(\om) ~.
    \end{align*}
    Thus, by Fatou's Lemma, \begin{align*}
        \lim \inf\frac{\lV\vphi(x) - \vphi(y_n)\rVH^2}{t_n^2} &= \lim \inf \int_\Om \frac{\left(\phi(\om,x) - \phi(\om,y_n) \right)^2}{t_n^2}dP(\om)\\
        &\geq \int_\Om \lim \inf \frac{\left(\phi(\om,x) - \phi(\om,y_n) \right)^2}{t_n^2}dP(\om)\\
        &= \int_\Om \left(D\phi(\om,x)[z]\right)^2 dP(\om)\\
        &>M^2 ~.
    \end{align*}
    We deduce that there exists $n\in\mathbb{N}$ such that $\frac{\lV\vphi(x) - \vphi(y_n)\rVH}{\lV x - y_n\rV}>M$. Finally, \[\Lip(\vphi) = \sup_{x,y\in\X,x\neq y}\frac{\lV\vphi(x) - \vphi(y)\rVH}{\lV x - y \rV_E} = +\infty~.\]
\end{proof}

\subsection{Lipschitz constant for Neural Network kernels}
\subsubsection{Proof of \cref{cor:lip_NN}}
\label{app:lip_NN}

\begin{proof}
We want to show that $\phi\in\LpX{P}{2}$. By hypothesis, $\sigma$ is Lipschitz-continuous,  $\EP \lV w \rVe^2<\infty$ and $\EP b ^2<\infty$. We want to show that for every $x\in\X$, we have $\phi(\cdot,x) = (w,b)\mapsto \sigma(w^\T x+b)\in\Lp{P}{2}$. Fix $x\in\X$. The function $\phi(\cdot,x)$ is measurable because it is continuous. Then we need to show that it is square-integrable. By the Lipschitz-continuity of $\sigma$, we know that for every $(w,b)\in\RR^d\times\RR$, $|\phi((w,b),x)|\leq |\phi((0,0),x)| + \Lip(\sigma)(\lV x \rVe\lV w \rVe +|b|)$. We deduce $\EP \phi((w,b),x)^2<\infty$ and $\phi\in\LpX{P}{2}$.

Second, we show that $\Lip(h)\leq \Lip(\sigma)\sqrt{\EP \lV w \rVe^2 \EP u^2}$ for every $u\in\Lp{P}{2}$.
Recall that $h(x) = \int_\Om u(w,b) \sigma(w^\T  x+b)dP(w,b)$. Note that for every $(w,b)\in\RR^d\times\RR$ and  $x,x'\in\RR^d$, \begin{align*}
        |\phi((w,b),x) - \phi((w,b),x') |&= |\sigma(w^\T x +b)  - \sigma(w^\T x' +b)  |\\
        &\leq \Lip(\sigma) |w^\T x +b - (w^\T x' +b)|\\
        &\leq  \Lip(\sigma)\lV w \rVe \lV x-x'\rVe~.
    \end{align*}
    By proposition \ref{prop:upperbound}, we have that the feature map of the integral kernel $\vphi$ is Lipschitz-continuous with $\Lip(\vphi)\leq \Lip(\sigma)\sqrt{ \EP \lV w \rVe^2}$. Using the operator $S$ defined in \Cref{cor:integral_rkhs}, we have that $h = S(u)$. By the characterization of the norm in the RKHS in \eqref{eq:lipRKHSchar}, we get \begin{align*}
        \Lip(h)&\leq \Lip(\vphi)\lV h \rVH\\
        &\leq \Lip(\vphi)\lV u \rVL{P}{2}\\
& \leq \Lip(\sigma)\sqrt{\EP \lV w \rVe^2 \EP u^2}~.
        \end{align*}
\end{proof}

\subsubsection{Proof of \cref{prop:RNNlipinf}}
\label{app:lip_NN2}

Suppose, as in the corollary, that $\sigma'$ is measurable and $P$ has a continuous density. We will show that the assumptions (i), (ii) and (iii) in Theorem \ref{2 inf lip} are true.
        
        \textbf{Assumption (i):}         
        It is shown in \Cref{app:lip_NN} that the $  F(\om )$ in the first assumption of \Cref{2 inf lip}
        corresponds to $ F(\om ) =     \Lip(\sigma)\lV w \rVe$ due to $\Lip(\phi((w,b),\cdot)\leq \Lip(\sigma)\lV w \rVe$ for all $(w,b)\in\RR^d\times \RR$. It is clear that  $F  \in \Lp{P}{2}$. Hence, this holds with both cases.

    \textbf{Assumption (ii):}

    \textbf{Case $\sigma$ is differentiable on $\RR$:} We have that $\phi(\om,\cdot)$ is differentiable everywhere for all $\om\in\Om$.
    
    \textbf{Case where $\sigma'$ is measurable and $P$ is absolutely continuous:}
    Fix $x\in\X$. We denote by $\Gamma$ the set of points in $\RR$ where $\sigma$ is not differentiable and by $\Gamma_x$ the set of points $\om$ in $\Om$ where $\phi(\om,\cdot)$ is not differentiable at $x$. We vectorize an element $(w,b)\in\Om$ by $\bar{\om} = (w_1,...,w_d,b)\in\RR^{d+1}$, where $w=(w_1,...w_d)$. Furthermore, we write $\bar{x} = (x_1,...,x_d,1)$ where $x = (x_1,...,x_d)$. With those new notations, we can rewrite $\phi(\om,x) = \sigma(\bar{\om}^\T \bar{x})$. It leads to write $\Gamma_x$ as the set $\{ \bar{\om} : \bar{\om}^\T\bar{x}\in\Gamma\}$. We will write it in
    in another orthonormal basis aiming to compute its Lebesgue measure. Let the first component be $e_1 = \frac{\bar{x}}{\lV\bar{x}\rVe}$. We can complete this to form an orthonormal basis $(e_1,...,e_{d+1})$. In this basis, we have $x=(\lV x \rV,0,...,0)$ and we denote the coordinates of $\bar{\om}$ in this basis by $(a_1,...,a_{d+1})$. The condition $\lb \bar{\om}, \bar{x}\rb \in \Gamma$ is equivalent to $a_1\lV \bar{x} \rV \in\Gamma$ since the inner product is independent of the orthonormal basis. Thus, $a_1\in \frac{\Gamma}{\lV\bar{x}\rVe}$ since $\lV\bar{x}\rVe\neq0$ by definition.

    Finally, in this basis, we can write $\Gamma_x=\frac{\Gamma}{\lV\bar{x}\rVe} \times \mathbb{R}^d$. 
    From the Lipschitz-continuity of $\sigma$,  $\Gamma$ is contained in a Lebesgue-measurable set $\Gamma_0$ with $\lambda^1\left(\frac{\Gamma_0}{\lV\bar{x}\rVe}\right) = 0$. Therefore $\Gamma_x$
    is contained in the set $\frac{\Gamma_0}{\lV\bar{x}\rVe} \times \mathbb{R}^d$ whose Lebesgue measure is zero \cite[Theorem 8.6]{rudin1987}. Besides, from the assumption that $P$ is absolutely continuous w.r.t. the Lebesgue measure $\lambda^{d+1}$, we have $P\left(\frac{\Gamma_0}{\lV\bar{x}\rVe} \times \mathbb{R}^d\right) = 0$.

    \textbf{Assumption (iii):} The application 
    $D_x \phi(\cdot,x)[z] = (w,b)\mapsto  ( w^\T z) \sigma'(w^\T x +b)$ 
    is measurable, since $\sigma'$ is measurable by hypothesis in both cases.

\label{app:proofProp26}

\subsubsection{Proof of \cref{propRNN}}
\label{app:lip_NN3}

\begin{proof}
We first need to apply Corollary \ref{cor:lip_NN} aiming to show that $\phi\in\LpX{P}{2}$ and that the assumptions of Theorem \ref{2 inf lip} are valid. $\sigma$ is Lipschitz-continuous by hypothesis. 
Then, $(w,b)\mapsto \lV w \rVe + | b |$ is square integrable since 
$\EP \lV w \rVe^2 = d \gamma^2$ (Gaussian law) and $\EP b^2<\infty$ by hypothesis. Then $\phi\in\LpX{P}{2}$. Furthermore, by hypothesis, $\sigma'$ is measurable.
We will show that the measure $P$ is absolutely continuous w.r.t. the Lebesgues measure $\lambda^{d+1}$ on $\RR^{d+1}$. Let $E\in \mathcal{B}(\RR^{d}\times \RR)$. We note $E_x=\{y\in\RR:(x,y)\in E\}$ and $E^y=\{x\in\RR^d:(x,y)\in E\}$. We suppose that $\lambda^{d+1}(E)=(\lambda^d\otimes\lambda^1)(E) = 0$ and we aim to prove that $P(E)=0$. According to \cite[Theorem 34.A p.147]{measure_theory}, we have $\lambda^1$-almost surely $\lambda^d(E_x)=0$ or $\lambda^d$-almost surely $\lambda^1(E)=0$. Hence, by the absolute continuity of $p_b$ and $\mu_\gamma^d$, we have $\mu_\gamma^d$-almost surely $p_b(E_x)=0$ or $p_b$-almost surely $\mu_\gamma^d(E^y)=0$. By the same theorem, $P(E) = (\mu_\gamma^d\otimes p_b)(E) = 0$. 
According to Corollary \ref{cor:lip_NN}, the assumptions of Theorem \ref{2 inf lip} hold. 
Next, we compute $\EP [ (D_x\sigma (w^\T x+b ) [\z])^2]$ to get an expression of the Lipschitz constant of $\vphi$.

\textbf{Case $x\neq 0$:} Define the functions $\beta(a) = \EPP [(\zeta^2-\gamma^2)\sigma'(a\zeta+b)^2]$ and $\alpha(a) = \EPP[\sigma'(a\zeta+b)^2]$. We will show that for all $\z = (\z_1,...,\z_d)$ and $x = (x_1,...,x_d)$ in $\RR^d$, \[\int_\Om ( D_x\phi(\om,x)[\z] )^2dP(\om) = \frac{(x^\T \z)^2}{\lV x \rVe^2} \beta(\lV x\rVe) + \lV z \rVe^2 \alpha(\lV x \rVe) ~.\]
First observe that
\begin{align*}
    \EP \left[(D_x\phi(\om,x)[\z])^2 \right]&= \EP \left[\left(w^\T \z \sigma'(w^\T  x+b)\right)^2\right]\\
    &= \EP \left[ \sum_{1\leq k,l\leq d} \z_k \z_l\underbrace{w_{k} w_{l}  \sigma'(w^\T  x+b)^2}_{G_{k,l}}\right]\\
    &=  \sum_{1\leq k,l\leq d} \z_k \z_l\EP \left[\underbrace{w_{k} w_{l}  \sigma'(w^\T  x+b)^2}_{G_{k,l}}\right]~.
    \end{align*}
     Attempting to compute $\EP[G_{k,l}]$ we will proceed with an orthogonal decomposition of $w$ along the direction $x / \| x \|$ and the orthogonally complementary subspace: $w = \frac{x }{ \| x \|} \zeta +  \eta$ where $\lb \eta , x \rb = 0$. Since \[\zeta = w^\T \frac{x}{\lV x \rVe} = \sum_{k=1}^d w_k \frac{x_k}{\lV x \rVe}~,\] it follows that $\zeta$ follows a Gaussian distribution with $\EP \zeta = 0$ and $\EP \zeta^2 = \sum_{k=1}^d w_k^2 \frac{x_k^2}{\lV x \rVe^2} = \gamma^2$, i.e. $\zeta\sim\mathcal{N}(0,\gamma^2)$.
   Let $\eta_k$ be the $k$-th component of $\eta$, we have $\eta_k = w_k - \frac{x_k}{\lV x \rVe} \zeta$. Hence, $\eta_k$ is Gaussian with $\EP \eta_k = 0$ and \begin{align*}
         \EP \eta_k^2 &= \EP w_k^2 + \frac{x_k^2}{\lV x \rVe^2}\EP \zeta^2 - 2  \frac{x_k}{\lV x \rVe} \EP \zeta w_k\\
         &= \gamma^2 + \frac{x_k^2}{\lV x \rVe^2}\gamma^2 - 2  \frac{x_k^2}{\lV x \rVe^2} \EP \sum_{l=1}^d w_l w_k \\
         &= \gamma^2 \left( 1 -  \frac{x_k^2}{\lV x \rVe^2}\right)~.
     \end{align*}
     Let us re-parameterize $\eta=(\eta_1,\cdots,\eta_d)$ with the random variables $(\zeta_1,\cdots,\zeta_d)$, i.e.
     \[
        \eta_k  = \sqrt{1-\frac{x_k^2}{\lV x \rVe^2}} \zeta_k,  \quad \zeta_k \sim \mathcal{N}(0,\gamma^2), \quad \forall 1\leq k \leq d .
     \]
Finally, observe that the Gaussian random variables $\zeta_k$ and $\zeta$ are decorrelated and hence they are independent,\begin{align*}
    \EP \zeta \eta_k &= \zeta \left(w_k - \frac{x_k}{\lV x \rVe}\right)\\
    &= \EP w_k \zeta - \frac{x_k}{\lV x \rVe}\EP \zeta^2\\
    &= \frac{x_k }{\lV x \rVe} \gamma^2 - \frac{x_k }{\lV x \rVe} \gamma^2\\
    &=0~.
\end{align*}
Now, we compute 
    \begin{align*}
        \EP[G_{k,l}] &= \EP\left[w_k w_l \sigma'(w^\T  x+b)^2\right]\\
        &=\EP\left[\left(\frac{x_k}{\lV x \rVe}\zeta +\sqrt{1-\frac{x_k^2}{\lV x \rVe^2}}\zeta_k\right)\left(\frac{x_l}{\lV x \rVe}\zeta +\sqrt{1-\frac{x_l^2}{\lV x \rVe^2}}\zeta_l\right)\sigma'(\zeta \lV x \rV + b)^2\right]\\
        &= \EP\left[\left(\frac{x_kx_l}{\lV x \rVe^2}\zeta^2 + \underbrace{\sqrt{1-\frac{x_k^2}{\lV x \rVe^2}}\sqrt{1-\frac{x_l^2}{\lV x \rVe^2}}\zeta_k\zeta_l}_{A}\right)\sigma'(\zeta \lV x \rV + b )^2\right]\\
        &=\EP\left[\frac{x_kx_l}{\lV x \rVe^2}\zeta^2 \sigma'(\zeta \lV x\rVe+b)^2\right] + \EP[A]\EP[\sigma'(\zeta \lV x \rV + b )^2] ~.
    \end{align*}
    The last equality is due to the decorrelation (Gram-Schmidt) and consequently the independence (Gaussian) of $\zeta_k,\zeta_l$ with $\zeta$. 
    \begin{align*}
        \EP[A] &= \EP\left[\left(w_k - \frac{x_k}{\lV x \rVe}\zeta\right)\left(w_l - \frac{x_l}{\lV x \rVe}\zeta\right)\right]\\&= \underbrace{\EP[w_kw_l]}_{\gamma^2\delta_{k,l}} + \frac{x_kx_l}{\lV x \rVe^2}\underbrace{\EP[\zeta^2]}_{\gamma^2} - \frac{x_k}{\lV x \rVe}\underbrace{\EP[w_l\zeta]}_{\gamma^2\frac{x_l}{\lV x \rVe}} - \frac{x_l}{\lV x \rVe}\underbrace{\EP[w_k\zeta]}_{\gamma^2\frac{x_k}{\lV x \rVe}}\\
        &=\gamma^2\left(\delta_{k,l} - \frac{x_kx_l}{\lV x \rVe^2}\right)~.
    \end{align*}
    Now we can compute $ \EP[G_{k,l}]$:
    \begin{align*}
        \EP[G_{k,l}] &= \frac{x_kx_l}{\lV x \rVe^2}\EP[(\zeta^2-\gamma^2)\sigma'(\zeta\lV x\rVe + b )^2] + \gamma^2\delta_{k,l} \EP[\sigma'(\zeta\lV x\rVe + b)^2]\\ &= \frac{x_kx_l}{\lV x \rVe^2}\beta(\lV x \rVe) + \gamma^2\delta_{k,l} \alpha(\lV x \rVe)~.
    \end{align*}
    Finally we obtain
    \begin{align*}
        \lV D_x\phi(\cdot,x)[\z]\rVL{P}{2}^2&=\sum_{1\leq k,l\leq d}\z_k\z_l \EP[ G_{k,l} ] \\
        &= \sum_{1\leq k,l\leq d} \frac{\z_k\z_lx_kx_l}{\lV x \rVe^2}\beta(\lV x \rVe) + z_k z_l\gamma^2\delta_{k,l}\alpha(\lV x \rVe)\\ &= \sum_{k=1}^d \z_kx_k \frac{\z^\T  x}{\lV x \rVe^2}\beta(\lV x \rVe) + \sum_{k=1}^d  \z_k^2\gamma^2 \alpha(\lV x \rVe)\\
        &= \frac{(\z^\T  x)^2}{\lV x \rVe^2}\beta(\lV x \rVe) + \lV \z \rVe^2 \gamma^2
        \alpha(\lV x \rVe)~.
    \end{align*}
      For all $x\in \X,\z\in E$ with $x\neq 0$, we have \begin{align*}
     \sup_{\z\in\BR}\lV D_x\phi(\cdot,x)[\z]\rVL{P}{2}^2&=\sup_{\z\in\BR} \frac{(x^\T  \z)^2}{\lV x \rVe^2} \beta(\lV x\rVe) + \lV z \rVe^2 \gamma^2\alpha(\lV x \rVe)\\
    &= \frac{\left(x^\T \frac{x}{\lV x\rVe}\right)^2}{\lV x \rVe^2} \beta(\lV x\rVe) + \lV \frac{x}{\lV x\rVe} \rVe^2 \gamma^2\alpha (\lV x \rVe) \\
    &= \gamma^2\alpha(\lV x \rVe) + \beta(\lV x \rVe)\\
    &=  \underbrace{\EP \zeta^2[\sigma'(\lV x \rVe\zeta+b)^2]}_{\nu(x)}~.
    \end{align*}

    \textbf{Case $x=0$:}
    Then $\EP G_{k,l} = \EP[w_k w_l] \EP [\sigma'(b)^2] = \gamma^2 \delta_{k,l} \EP [\sigma'(b)^2]$. We deduce that \begin{align*}
    \sup_{\z\in\BR}\lV D_x \phi(\cdot,x)[z]\rVL{P}{2}^2 &= \sup_{\z\in\BR}\sum_{k=1}^d \sum_{l=1}^d \EP z_k z_l G_{k,l}\\
    &= \sup_{\z\in\BR}\sum_{k=1}^d z_k^2  \EP \zeta^2[\sigma'(b)^2] \\
    &=  \sup_{\z\in\BR}\lV z \rVe^2 \EP \zeta^2[\sigma'(b)^2]\\
    &=  \underbrace{\EP [ \zeta^2\sigma'(\lV 0 \rVe\zeta+b)^2]}_{\nu(0)}~.\\
    \end{align*}
    Thus we can write  
\begin{align*}
     \sup_{x\in\X,\z\in\BR}\lV D_x \phi(\cdot,x)[z]\rVL{P}{2}^2 & = \sup_{x\in\X}\sup_{\z\in\BR}\lV D_x \phi(\cdot,x)[z]\rVL{P}{2}^2\\
     &= \sup_{x\in\X} \nu(x) ~.
\end{align*}
    Finally we have,
    \begin{align*}
         \sup_{x\in\X,\z\in\BE} \lV D_x\phi(\cdot,x)[\z]\rVL{P}{2} =\sup_{x\in\X}\sqrt{ \EPP [\zeta^2\sigma'(\lV x \rVe\zeta+b)^2]} ~.
         \end{align*}

\end{proof}
\subsubsection{Proof of \cref{THMRFF}}
\label{thmRFF}
\begin{proof}
\noindent\textbf{Case $\E_{p_w}\|w\|^2<+\infty$.} First, we show that \cref{2 inf lip} can be applied using \cref{prop:RNNlipinf}. 

1. The application $(w,b)\mapsto\lV  w\rVe + |b|\in\Lp{P}{2}$ since $\E_{p_b} |b|^2<\infty$ and $\E_{p_w} \lV  w\rVe^2<\infty$.

2. The function $\sigma = \sqrt{2\kappa(0)}\cos$ is differentiable in $\RR$ and $\sigma' = -\sqrt{2\kappa(0)}\sin$ is measurable as it is continuous.

Applying \cref{2 inf lip}, we get \begin{align*}
    \Lip(\vphi) &= \sup_{x\in\X, z\in \BR}\sqrt{\E_{w\sim p_w, b\sim p_b} 2\kappa(0)(w^\T z)^2 \sin(w^\T x + b)^2}\\
    &= \sup_{z\in\BR} \sqrt{\E_{w\sim p_w} \kappa(0)(w^\T z)^2}\\
    &= \sup_{z\in\BR} \sqrt{\kappa(0)z^\T \E_{w\sim p_w} [w w^\T] z }\\
    &= \sqrt{\kappa(0)\lambda_{\max}(\text{Cov}(w))} ~.
\end{align*}
The precedent is valid since $\E_{p_w} w  =0$. Indeed $\E_{p_w} \lV w \rV  < \infty$ and $\kappa$ is symmetric.

\noindent Recall that, for $t\in\RR^d$,
\[
\kappa(t)=\int_{\RR^d} e^{jw^\top t}\,d\mu(w)=\kappa(0)\,\E_{p_w}\bigl[\cos(w^\top t)\bigr]~.
\]
Assume $\E_{p_w}\|w\|^2<\infty$. Since $|\cos(\cdot)|\le 1$ and $|\sin(\cdot)|\le 1$, for each $i,j\in\{1,\dots,d\}$ we have
\[
\Bigl|w_i \sin(w^\top t)\Bigr|\le |w_i|~,\qquad
\Bigl|w_i w_j \cos(w^\top t)\Bigr|\le |w_i w_j|~,
\]
and $|w_i|,|w_i w_j|\in L^1(p_w)$ by Cauchy--Schwarz and $\E_{p_w}\|w\|^2<\infty$.
Therefore, by dominated convergence, we may differentiate under the expectation:
\begin{align*}
\nabla \kappa(t)
&= \kappa(0) \nabla \E_{p_w}\bigl[\cos(w^\top t)\bigr]
= -\kappa(0)\,\E_{p_w}\bigl[w\,\sin(w^\top t)\bigr],\\
\nabla^2 \kappa(t)
&= -\kappa(0)\E_{p_w}\bigl[ w w^\top \cos(w^\top t)\bigr].
\end{align*}
Evaluating at $t=0$ and using $\cos(0)=1$ yields
\[
\nabla^2 \kappa(0)= -\kappa(0) \E_{p_w}[w w^\top]=-\kappa(0)\,\Cov(w),
\]
because $\E[w]=0$. Consequently,
\[
\kappa(0)\,\lambda_{\max}(\Cov(w))=\lambda_{\max}\!\bigl(-\nabla^2\kappa(0)\bigr),
\]
so the Lipschitz constant obtained above can be equivalently written as
\[
\Lip(\vphi)=\sqrt{\kappa(0)\,\lambda_{\max}(\Cov(w))}
= \sqrt{\lambda_{\max}\!\bigl(-\nabla^2 \kappa(0)\bigr)}.
\]

\noindent\textbf{Case $\E_{p_w}\|w\|^2=+\infty$.}
Let $\omega=(w,b)$ and consider the random feature integrand
\[
\phi(\omega,x)=\sqrt{2\kappa(0)}\cos(w^\top x+b).
\]
For every $x\in\X$, $\phi(\omega,\cdot)$ is Fr\'echet differentiable at $x$ with
\[
D_x\phi(\omega,x)[z]=-\sqrt{2\kappa(0)}\,(w^\top z)\sin(w^\top x+b),
\qquad z\in\RR^d,
\]
and $(\omega,x)\mapsto D_x\phi(\omega,x)[z]$ is measurable for each fixed $z$ (continuity in $(w,b,x)$).

Now assume $\E_{p_w}\|w\|^2=+\infty$. Since $\|w\|^2=\sum_{i=1}^d w_i^2$, there exists an index $i_0$ such that
$\E_{p_w}[w_{i_0}^2]=+\infty$.
Let $z=e_{i_0}\in\RR^d$ (unit vector). Using $\sin^2(\cdot)\le 1$ and Fubini/Tonelli,
\begin{align*}
\left\|D_x\phi(\cdot,x)[e_{i_0}]\right\|_{L^2(P)}^2
&= \E_{w\sim p_w,b\sim p_b}\Bigl[2\kappa(0)\,\sin^2(w^\top x+b)\,w_{i_0}^2\Bigr]\\
&= \E_{p_w}\Bigl[2\kappa(0)\,w_{i_0}^2\,\E_{b\sim p_b}[\sin^2(w^\top x+b)]\Bigr]. 
\end{align*}

Since $b\sim \mathcal{U}[0,2\pi]$, we have $\E_{p_b}[\sin^2(\theta+b)]=\frac12$ for every $\theta\in\RR$, hence
\[
\left\|D_x\phi(\cdot,x)[e_{i_0}]\right\|_{L^2(P)}^2
= \kappa(0)\,\E_{p_w}[w_{i_0}^2]=+\infty,
\]
for every $x\in\X$. Therefore,
\[
\sup_{x\in\X,\;z\in\BE}\left\|D_x\phi(\cdot,x)[z]\right\|_{L^2(P)}=+\infty.
\]
By \cref{2 inf}, this implies $\Lip(\vphi)=+\infty$, which concludes the proof.
\end{proof}

\bibliographystyle{siamplain}
\bibliography{siam}

\end{document}